\def\eqref#1{equation~\ref{#1}}
\def\1{\bm{1}}
\def\vmu{{\bm{\mu}}}
\def\vtheta{{\bm{\theta}}}
\def\vepsilon{{\bm{\epsilon}}}
\def\va{{\bm{a}}}
\def\vb{{\bm{b}}}
\def\ve{{\bm{e}}}
\def\vw{{\bm{w}}}
\def\vx{{\bm{x}}}
\def\vy{{\bm{y}}}
\def\vz{{\bm{z}}}
\def\mA{{\bm{A}}}
\def\mH{{\bm{H}}}
\def\mI{{\bm{I}}}
\def\mK{{\bm{K}}}
\def\mW{{\bm{W}}}
\def\mX{{\bm{X}}}
\def\mY{{\bm{Y}}}
\def\mZ{{\bm{Z}}}
\def\mSigma{{\bm{\Sigma}}}
\DeclareMathAlphabet{\mathsfit}{\encodingdefault}{\sfdefault}{m}{sl}
\SetMathAlphabet{\mathsfit}{bold}{\encodingdefault}{\sfdefault}{bx}{n}
\def\gB{{\mathcal{B}}}
\def\gC{{\mathcal{C}}}
\def\gF{{\mathcal{F}}}
\def\gG{{\mathcal{G}}}
\def\gN{{\mathcal{N}}}
\def\gX{{\mathcal{X}}}
\def\gY{{\mathcal{Y}}}
\newcommand{\E}{\mathbb{E}}
\newcommand{\Ls}{\mathcal{L}}
\newcommand{\R}{\mathbb{R}}
\newcommand{\N}{\mathbb{N}}
\newcommand{\Cov}{\mathrm{Cov}}
\newcommand{\ncone}{\mathrm{NC1}}
\newcommand{\rncone}{\mathrm{RNC1}}
\newcommand{\nctwo}{\mathrm{NC2}}
\DeclareMathOperator*{\argmax}{arg\,max}
\DeclareMathOperator{\Tr}{Tr}
\DeclareMathOperator{\diag}{diag}
\crefname{assumption}{Assumption}{Assumptions}
\Crefname{assumption}{Assumption}{Assumptions}
\theoremstyle{plain}
\newtheorem{theorem}{Theorem}[section]
\newtheorem{proposition}[theorem]{Proposition}
\newtheorem{lemma}[theorem]{Lemma}
\newtheorem{corollary}[theorem]{Corollary}
\theoremstyle{definition}
\newtheorem{definition}[theorem]{Definition}
\newtheorem{assumption}[theorem]{Assumption}
\newtheorem{remark}[theorem]{Remark}
\title{
    Explaining Grokking and Information Bottleneck through Neural Collapse Emergence
}
\author{Keitaro Sakamoto \& Issei Sato \\
Department of Computer Science \\
The University of Tokyo \\
Tokyo, Japan \\
\texttt{\{sakakei-1999,sato\}@g.ecc.u-tokyo.ac.jp} \\
}
\begin{document}

\maketitle

\begin{abstract}
The training dynamics of deep neural networks often defy expectations, even as these models form the foundation of modern machine learning.
Two prominent examples are grokking, where test performance improves abruptly long after the training loss has plateaued, and the information bottleneck principle, where models progressively discard input information irrelevant to the prediction task as training proceeds.
However, the mechanisms underlying these phenomena and their relations remain poorly understood.
In this work, we present a unified explanation of such late-phase phenomena through the lens of neural collapse, which characterizes the geometry of learned representations.
We show that the contraction of population within-class variance is a key factor underlying both grokking and information bottleneck, and relate this measure to the neural collapse measure defined on the training set.
By analyzing the dynamics of neural collapse, we show that distinct time scales between fitting the training set and the progression of neural collapse account for the behavior of the late-phase phenomena.
Finally, we validate our theoretical findings on multiple datasets and architectures.
\end{abstract}

\section{Introduction}
\label{sec:introduction}

Deep neural networks (DNNs) have demonstrated remarkable success across a variety of tasks, including computer vision, natural language processing, and reinforcement learning, yet their training dynamics under gradient descent often reveal unexpected behavior.
In particular, when training continues beyond the point where the training loss has been sufficiently reduced, several intriguing late-phase phenomena have been reported.
One such phenomenon is \textit{grokking} \citep{power2022grokking}, where models initially converge to an overfitting solution that perfectly fits the training data but fails to generalize to unseen data.
However, when training continues for a sufficiently long time, the models unexpectedly generalize.
Another example is the \textit{information bottleneck} (IB) framework \citep{tishby2000information,tishby2015deep,shwartz2017opening}, an information-theoretic perspective on representation learning in DNNs that formalizes the goal of retaining task-relevant information while compressing the task-irrelevant input information.
One particularly intriguing observation here is that DNNs do not move directly toward an IB-optimal solution; instead, they first enter a fitting phase where they memorize the training data, followed by a later compression phase in the later training stage during which task-irrelevant input information is discarded.

Taken together, these phenomena share the characteristic that the network evolves toward a more desirable state in the late phase of training, suggesting that some internal change occurs within the training dynamics during this transition.
However, the mechanisms driving these phenomena, as well as their relationships, remain poorly understood.
Bridging this gap is essential for deepening our understanding of DNN training and for developing more effective training strategies.

In this work, we focus on the geometric structure of the network's representation space and, for the first time, demonstrate that the dynamics of \textit{neural collapse} \citep{papyan2020prevalence} provide a unified explanation for these late-phase phenomena, offering new insights into their underlying mechanisms.
More specifically, the contributions of this work are summarized as follows.
\begin{itemize}[leftmargin=12pt]
\item First, we show that the contraction of within-class variance in representations plays a crucial role in both grokking and IB dynamics.
Specifically, for grokking, we derive an upper bound on the generalization error in terms of the population within-class variance of the learned representations (\cref{thm:grokking}).
Similarly, for the IB principle, we show that the redundant information in the representations, which is discarded in the IB compression phase, is bounded by the population within-class variance (\cref{thm:ib_upper_bound}).
These results motivate the analysis of neural collapse, whose properties include the collapse of empirical within-class representations in the training data, known as NC1 (\cref{sec:late_phase_training}).
\item Second, we provide a quantitative analysis of the discrepancy between the population within-class variance and its empirical counterpart, using an approach analogous to generalization error analysis (\cref{thm:within_class_variance_and_neural_collapse}).
This allows us to evaluate to what extent the reduction of empirical within-class variance, that is, the progression of neural collapse, implies a corresponding reduction in the population within-class variance, a key quantity we identified.
In this way, we relate the behaviors of grokking and IB dynamics to the development of neural collapse (\cref{sec:within_class_variance_and_neural_collapse}).
\item Finally, building on the preceding results, we analyze the development of neural collapse by explicitly tracking the dynamics of gradient descent.
Leveraging the results of \citet{jacot2025wide}, we establish that the empirical within-class variance decreases during training and characterize its time scale (\cref{thm:nc1_dynamics}).
In particular, depending on the strength of weight decay, the time scale on which neural collapse is sufficiently realized can lag far behind the time scale of fitting the training data.
This result suggests that the timing of neural collapse emergence underlies the delayed generalization in grokking as well as the compression phase in IB dynamics (\cref{sec:neural_collapse_dynamics}).
\end{itemize}

\begin{figure}[t]
\centering
\includegraphics[width=0.98\textwidth]{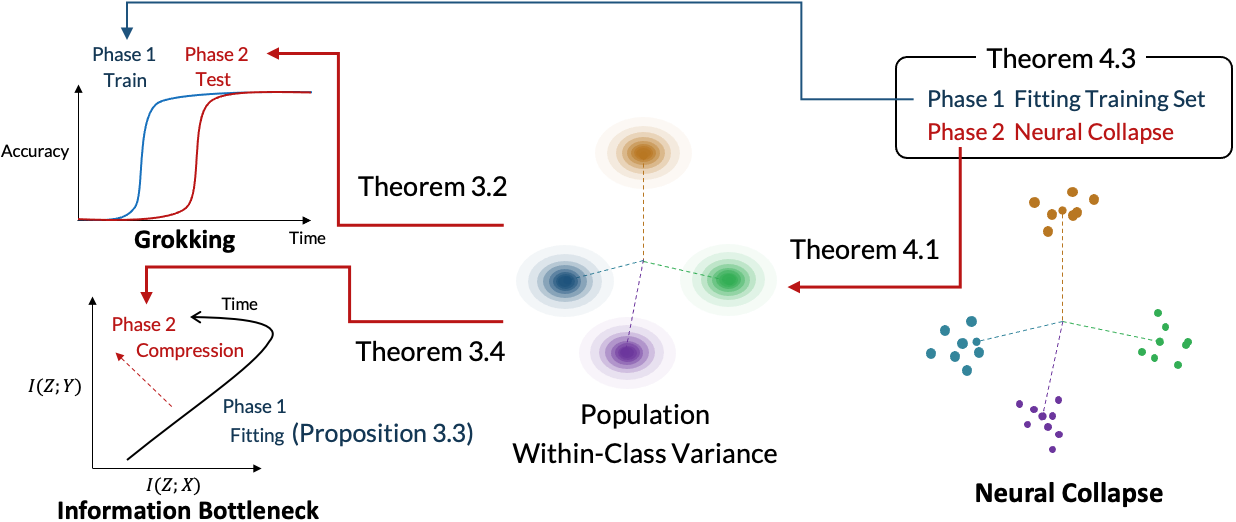}
\caption{Conceptual relationships in the late-phase training discussed in this work.}
\label{fig:figure_1}
\end{figure}

These relationships are illustrated in \cref{fig:figure_1}, along with the corresponding theorems and propositions presented in the main text.
In addition to the theoretical analysis, we validate our findings through extensive experiments on various datasets and architectures.
Taken together, our work deepens the understanding of late-phase training phenomena from the perspective of neural collapse.
\footnote{Code is available at \url{https://github.com/keitaroskmt/collapse-dynamics}.}

\paragraph{Notation.} 
We consider a $K$-class classification problem with a dataset $S = \{(\vx_i, y_i)\}_{i=1}^N$ consisting of $N$ examples, where $\vx_i \in \R^d$ is the input and $y_i \in [K] (\coloneqq \{1,2,\dots,K\})$ is the class label.
The input domain is $\gX \subseteq \R^d$, and $\vy_i$ denotes the one-hot encoding label in $\{0,1 \}^K$.
Let $\mX \in \R^{d \times N}$ and $\mY \in \{0,1\}^{K \times N}$ denote the training inputs and one-hot labels, respectively.
We use $S_c = \{ (\vx_i, y_i) \in S \mid y_i = c \}$ to denote the subset of class-$c$ examples, and, by abuse of notation, also the corresponding index set $\{ i \in [N] \mid y_i = c \}$.
We denote its cardinality as $n_c = |S_c|$.
Let $X$ and $Y$ denote the random variables representing the input and label, respectively.
We write $p_{X,Y}$ for the joint distribution over $(X,Y)$, and $p_X$ and $p_Y$ for their marginal distributions.
When the variables are clear from the context, we simply write $p$ to refer to the corresponding distribution.
Let $I(X; Y)$ be the mutual information (MI) between random variables $X$ and $Y$.
We denote the multivariate Gaussian distribution with mean $\vmu$ and covariance matrix $\mSigma$ as $N(\vmu, \mSigma)$.
We use $\log(\cdot)$ to denote the natural logarithm.
For linear-algebraic notation, for a matrix $\mA$, we use $\|\mA\|_2$ to denote the spectral norm, and $\|\mA\|_{2,1}$ to denote the $(2, 1)$ matrix norm, defined as $\|\mA\|_{2,1} = \sum_i \|\va_i\|_2$, where $\va_i$ is the $i$-th column of $\mA$.
Finally, we use $\langle \va, \vb \rangle = \va^\top \vb$ to denote the standard Euclidean inner product.

\section{Related Work}
\label{sec:related_work}

\paragraph{Grokking.}

On the empirical side, several studies attempted to explain the cause of grokking in terms of the parameter compression \citep{liu2023omnigrok,varma2023explaining} and some complexity measures \citep{nanda2023progress,liu2023grokking,humayun2024deep,demoss2025complexity}.
Other studies sought to relate grokking to other training-related concepts, such as double descent \citep{davies2023unifying,huang2024unified} and optimization stability \citep{thilak2022slingshot}.
On the theoretical side, a high-dimensional limit of the linear model was analyzed in the setting of regression \citep{levi2024grokking} and binary classification \citep{beck2025grokking}.
There are also studies analyzing two-layer networks with XOR data \citep{xu2024benign} and mean-field analysis \citep{rubin2024grokking}.
The most closely related line of work focuses on the transition from the kernel regime to the rich regime \citep{lyu2024dichotomy,kumar2024grokking}.
Our study provides a novel perspective based on the emergence of neural collapse, offering a new connection to the representation-learning view of \citet{liu2022towards}.

\paragraph{Information Bottleneck.}
The IB principle \citep{tishby2000information} has been analyzed in the context of deep learning by modeling the successive neural network layers as a Markov chain \citep{tishby2015deep, shwartz2017opening,michael2018on,geiger2021information,lorenzen2022information,adilova2023information,butakov2024mutual,butakov2024information}.
In addition to these studies analyzing DNN training from the IB perspective, minimizing the IB objective has also been shown to benefit generalization bounds \citep{kawaguchi2023does,sefidgaran2023minimum}.
Several studies have attempted to explain the IB dynamics.
For example, \citet{shwartz-ziv2019representation} attributed the compression phase to the diffusion component of stochastic gradient descent (SGD), while \citet{goldfeld2019estimating} and \citet{koch2025two} discussed its relation to geometric compression and grokking, respectively.
However, these studies lack the rigorous theoretical analysis of the IB dynamics. 
A related line of work analyzes late-phase behavior through reconstruction loss rather than mutual information \citep{schneider2024understanding,schneider2025Learning}, though this perspective provides only an indirect proxy for the IB dynamics.
In contrast, our work offers a new explanation of IB dynamics based on neural collapse, a geometric form of compression.

\paragraph{Neural Collapse.}
Neural collapse \citep{papyan2020prevalence} is a late-stage training phenomenon where the representations of the training data converge to a simplex equiangular tight frame (ETF) formed with the class mean representations.
A major line of research analyzes the optimality of such solutions under an unconstrained feature model (UFM) and its variants \citep{fang2021exploring,mixon2022neural,lu2022neural,tirer2022extended,thrampoulidis2022imbalance,dang2023neural,tirer2023perturbation,sukenik2023deep,sukenik2024neural,jiang2024generalized}, where the features are treated directly as optimization variables.
Several works have examined the learning dynamics toward neural collapse under UFM \citep{zhu2021geometric,mixon2022neural,ji2022an,zhou2022optimization,zhou2022all}, but this setting cannot reflect the input data properties and diverges from the actual parameter-based training dynamics.
Beyond UFM setting, recent studies instead analyze parameter updates under weight regularization, opening a promising new direction \citep{jacot2025wide,wu2025neural}.
Building on this, our study broadens the understanding of training dynamics by revealing the connection between late-phase phenomena and neural collapse.

\section{Emergent behavior in late-phase training}
\label{sec:late_phase_training}

In this section, we examine two intriguing phenomena that arise in the late-phase training and remain active topics of research: grokking and IB dynamics.
Through separate analysis in the following subsections, we demonstrate that the population within-class variance in the representation space plays a critical role and constitutes a unifying factor.
To this end, in the following, let $g: \R^d \to \R^{d_{\mathrm{rep}}}$ denote the feature extractor, which may take various model architectures,$\mW \in \R^{K \times d_{\mathrm{rep}}}$ the last layer classifier, and $f: \R^d \to \R^K$ with $f(\vx) = \mW g(\vx)$ the full model.
We use $f(\vx)_i$ to denote the $i$-th component of the prediction vector $f(\vx)$.

We now introduce the population within-class variance, which will play a central role in this section.
We remark that this metric is not introduced as an ad-hoc measure; rather, it naturally arises from our analysis of the second-phase dynamics presented below.
In particular, the scale-invariant formulation in \cref{def:population_within_class_variance} ensures that this quantity reflects genuine geometric concentration, disentangled from changes in output scale.
This motivates the following definition.
\begin{definition}[Population within-class variance]
\label{def:population_within_class_variance}
To evaluate variance in a scale-invariant manner, we consider the expected within-class variance of the rescaled feature extractor, defined as
\begin{align*}
\E_{X \mid Y=c} \left[ \left\| \tilde{g}(X) - \E_{X \mid Y=c}\left[ \tilde{g}(X) \right] \right\|_2^2 \right],
\text{ where }\;\; \tilde{g}(\vx) = \frac{g(\vx)}{B_g},
\;\;
B_g = \sup_{\vx \in \gX} \|g(\vx)\|_2.
\end{align*}
\end{definition}
This quantity, as well as its expectation over the label distribution $p_Y$, serves as a key component in the results of this section.
The results established here using the population within-class variance then motivate the subsequent analysis of neural collapse, which characterizes the reduction of empirical within-class variance in the training data.

\subsection{Grokking: Phase 2 Generalization Dynamics}
\label{sec:grokking}

In the context of grokking, \citet{liu2022towards} empirically shows that the acquisition of structured representations leads to a transition from memorization to generalization, which we further analyze here.
As a simple observation, since the network output is given by $f(\vx) = \mW g(\vx)$, achieving good generalization is facilitated when the representations of each class are linearly separable in distribution.
When the training loss is sufficiently reduced at some time step $\tau_1$, the existence of a weight matrix $\mW(\tau_1)$ indicates that the representations of the training set can be linearly separable; however, this does not guarantee that the representations in distribution are linearly separable.
Intuitively, when the representations of each class are more tightly clustered, achieving linear separability becomes easier, and better generalization performance can be expected.
Formalizing this intuition yields the following theorem.
\begin{theorem}[Generalization via population within-class variance]
\label{thm:grokking}
For a fixed feature extractor $g$ and the last layer $\mW = (\vw_1, \ldots, \vw_K)^\top$, we have
\begin{align*}
\Pr\left( \argmax_{i \in [K]} \left\{ f(\vx)_i \right\} \neq y \right)
&\leq
\sum_{c=1}^K p_Y(c) \sum_{k \neq c}
\left( 1 + \frac{\max\left\{ \left\langle \E_{X \mid Y = c}[\tilde{g}(X)], \frac{\vw_c - \vw_k}{\left\| \vw_c - \vw_k \right\|_2} \right\rangle, 0 \right\}^2}{ \E_{X \mid Y = c} \left[ \left\| \tilde{g}(X) - \E_{X \mid Y = c}[\tilde{g}(X)] \right\|_2^2 \right] } \right)^{-1}.
\end{align*}
\end{theorem}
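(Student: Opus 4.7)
The plan is to bound the misclassification probability via a two-level union bound and then apply Cantelli's one-sided Chebyshev inequality to a scalar linear projection of the feature vector. First, I would condition on the true class,
\begin{align*}
\Pr\!\left( \argmax_{i \in [K]} f(X)_i \neq Y \right)
= \sum_{c=1}^{K} p_Y(c)\, \Pr\!\left( \argmax_{i \in [K]} f(X)_i \neq c \,\middle|\, Y = c \right),
\end{align*}
and note that on the event $Y = c$, misclassification requires $f(X)_k \geq f(X)_c$ for some $k \neq c$, equivalently $(\vw_c - \vw_k)^\top g(X) \leq 0$. Since $g = B_g\, \tilde g$ with $B_g > 0$, the event is unchanged if $g$ is replaced by $\tilde g$. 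A union bound over $k \neq c$ then reduces the problem to bounding, for each pair $(c,k)$ with $\vw_c \neq \vw_k$, the scalar tail probability $\Pr(\vu_{c,k}^\top \tilde g(X) \leq 0 \mid Y=c)$, where $\vu_{c,k} = (\vw_c - \vw_k)/\|\vw_c - \vw_k\|_2$ is a unit vector.

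Next, I would apply Cantelli's inequality to $Z = \vu_{c,k}^\top \tilde g(X)$ under the conditional law $X \mid Y = c$. Let $\mu = \langle \vu_{c,k}, \E[\tilde g(X) \mid Y=c] \rangle$ and $\sigma^2 = \Var(Z \mid Y=c)$. A one-line Cauchy--Schwarz using $\|\vu_{c,k}\|_2 = 1$ gives $\sigma^2 \leq V_c \coloneqq \E[\|\tilde g(X) - \E[\tilde g(X)\mid Y=c]\|_2^2 \mid Y=c]$, which is exactly the denominator in the claim. When $\mu > 0$, Cantelli's bound with deviation $\mu$ gives
\begin{align*}
\Pr(Z \leq 0 \mid Y=c) \;\leq\; \frac{\sigma^2}{\sigma^2 + \mu^2} \;=\; \left( 1 + \frac{\mu^2}{\sigma^2} \right)^{-1},
\end{align*}
and since the map $\sigma^2 \mapsto \sigma^2/(\sigma^2+\mu^2)$ is monotone nondecreasing, replacing $\sigma^2$ by the upper bound $V_c$ preserves the inequality and recovers the stated expression. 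The case $\mu \leq 0$ only yields the trivial bound $1$, which is precisely what the $\max\{\cdot,0\}^2$ in the numerator of the theorem encodes. Combining with the two union bounds completes the argument.

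The main substantive choice here is the use of the \emph{one-sided} Chebyshev bound: the standard two-sided version $\sigma^2/\mu^2$ would exceed $1$ whenever $|\mu| \leq \sigma$ and thus be vacuous in exactly the small-separation, large-variance regime where the theorem is most informative, whereas Cantelli's form $(1 + \mu^2/\sigma^2)^{-1}$ stays automatically nontrivial and is monotone in $\sigma^2$, which is what lets the crude bound $\sigma^2 \leq V_c$ survive. Beyond that, only degenerate cases need comment: when $\vw_c = \vw_k$ the relevant event holds identically and the corresponding summand should be interpreted as $1$, and when $V_c = 0$ the representation is almost surely constant within class $c$, so the probability is either $0$ or $1$ according to the sign of $\mu$, consistent with the stated formula.
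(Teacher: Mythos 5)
Your proposal is correct and follows essentially the same route as the paper's proof: condition on the class, union bound over $k \neq c$, Cantelli's one-sided inequality applied to the projection onto $(\vw_c - \vw_k)/\|\vw_c - \vw_k\|_2$, and Cauchy--Schwarz to replace the directional variance by the full within-class variance (the monotonicity of $\sigma^2 \mapsto \sigma^2/(\sigma^2+\mu^2)$ that you invoke is exactly what makes that last substitution legitimate, and the paper uses it implicitly). Your remarks on the degenerate cases $\vw_c = \vw_k$ and $V_c = 0$ are sound additions but do not change the argument.
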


This theorem follows directly from applying the union bound and a variance-based tail probability bound.
The proof is given in \cref{sec:proof_grokking}.
\cref{thm:grokking} states that improving test accuracy is facilitated by two conditions: i) the class mean representations $\E_{X|Y=c}\left[ \tilde{g}(X) \right]$ become more aligned with the corresponding last-layer weights $\vw_c$, and ii) the population within-class variance decreases.
From a grokking perspective, when the training loss is sufficiently reduced, it remains unclear how well the classifier $\mW$ aligns with the class means of the representations.
Nevertheless, independent of the properties of $\mW$, reducing the within-class variance of the representations tightens the upper bound on the generalization error.
To uncover the mechanism of grokking, it remains to show that the within-class variance decreases even after the training accuracy has been sufficiently improved.

\subsection{IB Dynamics}
\label{sec:ib_dynamics}

The IB principle \citep{tishby2000information} formulates a constrained optimization problem: it seeks a compact representation $Z$ of the input $X$ that retains as much information as possible about the target $Y$ while compressing $X$.
This formulation is built on the idea that a concise short code can extract the features of $X$ essential for predicting $Y$, and the IB principle serves as one approach to explaining the representations acquired by DNNs.
Under the Markov chain $Y \to X \to Z$, this can be formulated using MI as finding the conditional distribution $p(Z | X)$ that minimizes
\begin{align}
\label{eq:ib_problem}
\min_{p(Z|X)} I(Z;X) - \beta I(Z;Y),
\end{align}
where $\beta > 0$ is a trade-off parameter controlling the balance between compression and information preservation.
When analyzing DNNs within the IB framework, we encounter several difficulties: for a deterministic network and a continuous representation, $I(Z; X)$ can be infinite, making the analysis ill-defined; furthermore, the network parameters are not reflected in the IB analysis \citep{michael2018on,amjad2019learning,goldfeld2019estimating}.
To address these issues, we analyze the representation after independently adding an arbitrarily small Gaussian noise $E \sim N(\bm{0}, \sigma^2 \mI)$, $\sigma \ll 1$, which is a standard approach in IB analysis of DNNs \citep{michael2018on,goldfeld2019estimating,butakov2024information}.
It should be noted that such a small amount of noise has a negligible effect on the network’s output, making it a good proxy for the actual network.
Since we are currently focusing on the output of the feature extractor $g$, we denote the representation as $Z = g(X) + B_g E$, where $B_g = \sup_{\vx \in \gX} \| g(\vx) \|_2$ denotes the output scale of $g$ introduced in \cref{def:population_within_class_variance}.
We use $\vz$ to denote the realization of $Z$.

Representation dynamics in the two-dimensional $(I(Z; X), I(Z; Y))$ plane, which is called \textit{information plane}, is a useful tool for analyzing the training dynamics of DNNs from the IB perspective \citep{shwartz2017opening}.
As a preliminary observation, to perform classification accurately with a neural network, both $I(Z; X)$ and $I(Z; Y)$ need to be sufficiently large; this follows from the following proposition (see \cref{sec:proof_ib_dynamics} for the proof):
\begin{proposition}[Phase 1 of IB dynamics]
\label{prop:ib_lower_bound}
For any last-layer classifier $\mW$, let $\ell_{CE}(y, \vz)$ denote the cross-entropy loss between the target $y \in [K]$ and the predicted logits $\mW \vz \in \R^K$, defined by $\ell_{CE}(y, \vz) = -\log\left( \exp((\mW \vz)_y) / \sum_{c=1}^K \exp((\mW \vz)_c) \right)$.
Then, we have
\begin{align*}
I(Z; X) \geq I(Z; Y) \geq - \E_{Y,Z}\left[ \ell_{CE}(Y, Z) \right] + \text{const}.
\end{align*}
\end{proposition}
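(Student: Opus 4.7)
The plan is to prove the two inequalities separately, both via standard information-theoretic arguments.

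For the first inequality $I(Z;X) \geq I(Z;Y)$, I would invoke the data processing inequality. The setup gives $Z = g(X) + B_g E$ with $E \sim N(\bm{0}, \sigma^2 \mI)$ independent of $(X,Y)$, so conditional on $X$ the variable $Z$ is independent of $Y$. This establishes the Markov chain $Y \to X \to Z$, and the data processing inequality then yields $I(Z;X) \geq I(Z;Y)$ directly.

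For the second inequality, I would use the Barber--Agakov variational lower bound on mutual information. Expand $I(Z;Y) = H(Y) - H(Y|Z)$. For any conditional distribution $q(y \mid \vz)$, the nonnegativity of the KL divergence gives
\begin{align*}
\E_Z\bigl[\KL\bigl(p(\cdot \mid Z)\,\|\,q(\cdot \mid Z)\bigr)\bigr]
= \E_{Y,Z}\bigl[\log p(Y \mid Z)\bigr] - \E_{Y,Z}\bigl[\log q(Y \mid Z)\bigr] \geq 0,
\end{align*}
which rearranges to $H(Y \mid Z) \leq -\E_{Y,Z}[\log q(Y \mid Z)]$. I would then choose the specific variational distribution $q(y \mid \vz) = \exp((\mW \vz)_y)/\sum_{c=1}^K \exp((\mW \vz)_c)$, which is exactly the softmax induced by the last-layer classifier $\mW$. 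With this choice $-\log q(y \mid \vz) = \ell_{CE}(y, \vz)$, so $H(Y \mid Z) \leq \E_{Y,Z}[\ell_{CE}(Y,Z)]$, and substituting back gives
\begin{align*}
I(Z;Y) \geq H(Y) - \E_{Y,Z}[\ell_{CE}(Y,Z)],
\end{align*}
with the constant identified as $H(Y)$, which depends only on the label marginal and is thus independent of the training dynamics.

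There is no real obstacle here: both ingredients (data processing and the variational MI bound) are textbook, and the only thing to verify is that the softmax based on $\mW$ is a legitimate conditional distribution over the $K$ classes, which is immediate. The mild subtlety is that $Z$ is the noised feature $g(X)+B_g E$ rather than $g(X)$ itself, but since $E$ is drawn independently of $(X,Y)$, the Markov property is preserved and the variational bound applies verbatim to the joint distribution of $(Y,Z)$.
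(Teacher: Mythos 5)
Your proposal is correct and matches the paper's argument: the first inequality is the data processing inequality for the Markov chain $Y \to X \to Z$, and the second is exactly the paper's variational (Barber--Agakov) bound with the softmax of $\mW \vz$ as the variational posterior, yielding the constant $H(Y)$. The only cosmetic difference is that you route the KL nonnegativity through $H(Y \mid Z)$ whereas the paper applies it directly to the integral form of $I(Z;Y)$; these are the same computation.
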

\cref{prop:ib_lower_bound} implies that if the network at initialization discards information about both $X$ and $Y$, the training process must appropriately increase $I(Z; Y)$, and consequently $I(Z; X)$, over time.
Please note that if the initial state of the network sufficiently preserves information about $X$ and $Y$ without collapsing the outputs, then this MI increase phase is unnecessary.

An intriguing observation in the existing information plane work is that, in the late stage of training, DNNs tend to compress $I(Z; X)$ while preserving $I(Z; Y)$, thereby moving toward a more optimal solution with respect to the IB objective in \cref{eq:ib_problem}.
We explain this behavior through the following theorem, which is based on the degree of collapse of the population within-class variance, a quantity introduced in \cref{def:population_within_class_variance}.

\begin{theorem}[Phase 2 of IB dynamics via population within-class variance]
\label{thm:ib_upper_bound}
Let $Z = g(X) + B_g E$, where $E \sim N(\bm{0}, \sigma^2 \mI)$.
Here, the variance $\sigma^2 > 0$ is chosen to be small enough to ensure a negligible effect on the network output.
Then, the superfluous information is bounded as follows:
\begin{align*}
I(Z; X) - I(Z; Y) = I(Z; X | Y) \leq \frac{1}{2\sigma^2} \E_{X,Y}\left[ \left\| \tilde{g}(X) - \E_{X \mid Y} \left[ \tilde{g}(X) \right] \right\|_2^2 \right].
\end{align*}
\end{theorem}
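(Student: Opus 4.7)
The plan is to split the claim into two pieces: the equality $I(Z;X)-I(Z;Y) = I(Z;X\mid Y)$, which follows from the Markov structure of $(Y, X, Z)$, and the upper bound on $I(Z;X\mid Y)$, which I would obtain via a standard variational argument using a Gaussian surrogate for the conditional law $p_{Z\mid Y}$.

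For the equality, the key observation is that $E \sim N(\vzero, \sigma^2 \mI)$ is drawn independently of $(X, Y)$, so $Z = g(X) + B_g E$ is a deterministic function of $(X, E)$ alone. Hence $(Y, X, Z)$ forms a Markov chain $Y \to X \to Z$, which gives $I(Z;Y \mid X) = 0$. Applying the chain rule for mutual information in two orderings, $I(Z;X,Y) = I(Z;X) + I(Z;Y \mid X) = I(Z;Y) + I(Z;X\mid Y)$, and subtracting yields $I(Z;X) - I(Z;Y) = I(Z;X\mid Y)$.

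For the upper bound, I would invoke the fact that for any probability density $q_y(\vz)$, the variational inequality $I(Z;X\mid Y=y) \leq \E_{X\mid Y=y}\bigl[\KL(p_{Z\mid X}\,\|\,q_y)\bigr]$ holds, with slack exactly $\KL(p_{Z\mid Y=y}\,\|\,q_y) \geq 0$. The natural choice is $q_y = N\bigl(\vmu_y,\, B_g^2\sigma^2\mI\bigr)$ with $\vmu_y = \E_{X\mid Y=y}[g(X)]$, matching the isotropic covariance of $p_{Z\mid X=\vx} = N(g(\vx),\, B_g^2 \sigma^2 \mI)$. Since the KL between two Gaussians with identical covariance reduces to the scaled squared distance between their means, one gets $\KL(p_{Z\mid X=\vx} \,\|\,q_y) = \tfrac{1}{2B_g^2\sigma^2}\|g(\vx) - \vmu_y\|_2^2$. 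Rewriting $g = B_g \tilde g$ absorbs the $B_g^2$ factor in the denominator, and taking expectations over $X \mid Y = y$ and then over $Y$ yields the stated bound.

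There is no serious obstacle here: the computation is largely routine, and the only design decision is the choice of the surrogate $q_y$. Picking the isotropic Gaussian centered at the class-conditional feature mean is essentially the unique choice that is simple enough to admit an explicit KL formula yet tight enough to expose the within-class variance on the right-hand side. The inequality holds for every $\sigma > 0$, so the ``arbitrarily small'' regime highlighted in the theorem statement does not introduce any technical difficulty in the proof itself.
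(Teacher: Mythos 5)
Your proof is correct, and it reaches the paper's bound by a somewhat different route. The equality part is identical: both you and the paper use independence of $E$ from $(X,Y)$ to get the Markov chain $Y \to X \to Z$ and then the chain rule. For the upper bound, the paper decomposes $I(Z;X\mid Y)=h(Z\mid Y)-h(Z\mid X)$, evaluates $h(Z\mid X)$ exactly as a Gaussian entropy, bounds $h(Z\mid Y)$ by the maximum-entropy Gaussian with matched covariance $\Cov_{X\mid Y=y}[g(X)]+B_g^2\sigma^2\mI$, and then relaxes $\log\det(\mA+\mI)\leq\Tr(\mA)$ to land on the trace of the within-class covariance. You instead apply the variational inequality $I(Z;X\mid Y=y)\leq\E_{X\mid Y=y}\bigl[\KL(p_{Z\mid X}\,\|\,q_y)\bigr]$ with the isotropic surrogate $q_y=N(\vmu_y,B_g^2\sigma^2\mI)$, which collapses to the scaled squared distance between means; since $\E_{X\mid Y=y}\bigl[\|g(X)-\vmu_y\|_2^2\bigr]=\Tr\bigl(\Cov_{X\mid Y=y}[g(X)]\bigr)$, the two routes give the exact same final constant. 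Both are instances of the same Gibbs/variational principle (the max-entropy lemma is equivalent to nonnegativity of the KL to the matched Gaussian), but your choice of an isotropic rather than covariance-matched surrogate lets you skip the differential-entropy bookkeeping and the $\log\det\leq\Tr$ step, at the cost of discarding the slightly tighter intermediate $\log\det$ bound that the paper's argument passes through. No gaps: the surrogate's density is everywhere positive so the KL terms are finite, and $p_{Z\mid X,Y}=p_{Z\mid X}$ holds by the independence of the noise, which you correctly note.
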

The proof is given in \cref{sec:proof_ib_dynamics}.
We further show the tightness and behavior of this upper bound in \cref{prop:ib_upper_bound_tightness}.
\cref{thm:ib_upper_bound} highlights the role of within-class variance in reducing superfluous information, corresponding to the evolution toward the upper-left in the information plane.
Together with \cref{thm:grokking}, the analysis in this section establishes population within-class variance as a key measure.
In the next section, we examine how this measure decreases during training and, for grokking, how it proceeds on a different time scale from fitting training set.

\section{Evolution of Within-class Variance}
\label{sec:evolution_of_within_class_variance}

In this section, motivated by the previous results, we analyze the training dynamics of within-class variance in the learned representation space.
We first reduce the discussion of within-class variance to neural collapse, namely the geometric arrangement of class-wise representations in the training set.
We then examine how neural collapse progresses under gradient descent, independently of the training loss convergence.
These results together shed light on the evolution of within-class variance.

\subsection{Population Within-class Variance and Neural Collapse}
\label{sec:within_class_variance_and_neural_collapse}

Building on \cref{thm:grokking,thm:ib_upper_bound}, we examine how the upper bounds based on within-class variance can be approximated using training data.
Since this requires delving into the specifics of the trained model, for the remainder of the paper we consider the following standard DNN:
\begin{align}
\label{eq:network_definition}
f(\vx) = \mW_L g(\vx) = \mW_L \sigma_{\mathrm{out}} \left( \mW_{L-1} \sigma \left( \cdots \sigma \left( \mW_1 \vx \right) \cdots \right) \right),
\end{align}
where $\mW_\ell \in \R^{d_\ell \times d_{\ell-1}}$, with $d_0 = d$ denoting the input dimension.
Note that $\mW_L$ and $d_{L-1}$ respectively correspond to $\mW$ and $d_{\mathrm{rep}}$ used in the previous section.
Here $\sigma$ denotes the element-wise activation function with $\sigma(0) = 0$ and $1$-Lipschitz, while $\sigma_{\mathrm{out}}$ is the activation at the output of the feature extractor $g$.
Throughout \cref{sec:within_class_variance_and_neural_collapse}, we consider the standard setting $\sigma_{\mathrm{out}} = \sigma$, and in \cref{sec:neural_collapse_dynamics}, we set $\sigma_{\mathrm{out}} = \operatorname{id}$, i.e., the identity map, for analytical convenience.

We first provide a formal bound on the difference between the population within-class variance, which served as an important measure in the previous section, and its empirical counterpart.
The following result is based on a standard approach of uniform convergence commonly used in generalization error analysis.
The proof is provided in \cref{sec:proof_concentration_within_class_variance}.
\begin{theorem}[Concentration of within-class variance]
\label{thm:within_class_variance_and_neural_collapse}
Suppose the input domain $\gX$ is bounded, i.e., $\|\vx\|_2 \leq B_x$ for all $\vx \in \gX$.
Let $\delta \in (0, 1)$ and $d_{\max} = \max_{0 \leq \ell \leq L-1} d_\ell$, and recall $B_g = \sup_{\vx \in \gX}\| g(\vx) \|_2$.
Define the complexity measures of $g$ as $\Pi(g) = \max\left\{ \prod_{\ell=1}^{L-1} \left\| \mW_\ell \right\|_2, 1 \right\}$ and $\Lambda(g) = \max\left\{ \big( \sum_{\ell=1}^{L-1} \left( \|\mW_\ell \|_{2,1} / \|\mW_\ell \|_2 \right)^{2/3} \big)^{3/2}, 1 \right\}$.
Then, with probability at least $1 - \delta$, for all $c \in [K]$, we have
\begin{align*}
&\left|\, \E_{X \mid Y=c} \left[ \left\| \tilde{g}(X) - \E_{X \mid Y=c}\left[ \tilde{g}(X) \right] \right\|_2^2 \right] - \frac{1}{n_c} \sum\nolimits_{i \in S_c} \left\| \tilde{g}(\vx_i) - \frac{1}{n_c} \sum\nolimits_{j \in S_c} \tilde{g}(\vx_j) \right\|_2^2 \,\right| \\
&\leq
O\left( \frac{1}{\sqrt{n_c}} \left[ \frac{1}{\sqrt{n_c}} + \frac{\Pi(g)B_x}{B_g} \log(n_c)\sqrt{\log(d_{\max})}\Lambda(g) + \sqrt{\log(K/\delta) + \log\log\left( \Pi(g) \Lambda(g) \right) } \right] \right).
\end{align*}
\end{theorem}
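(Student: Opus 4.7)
The plan is to reduce the two-sided bound to standard uniform-convergence machinery for norm-controlled neural network classes. Starting from the identity
$V_c := \E_{X\mid Y=c}\|\tilde g(X)-\mu_c\|_2^2 = \E_{X\mid Y=c}[\|\tilde g(X)\|_2^2]-\|\mu_c\|_2^2$
with $\mu_c = \E_{X\mid Y=c}[\tilde g(X)]$, and analogously for $\hat V_c$ with $\hat\mu_c = \tfrac{1}{n_c}\sum_{i\in S_c}\tilde g(\vx_i)$, the elementary inequality $\bigl|\|\mu_c\|_2^2-\|\hat\mu_c\|_2^2\bigr|\le 2\|\mu_c-\hat\mu_c\|_2$ (valid because $\|\tilde g\|_2\le 1$ and hence $\|\mu_c\|_2,\|\hat\mu_c\|_2\le 1$) reduces the target to controlling, uniformly in $g$ belonging to the class $\gG$ of networks of the form \eqref{eq:network_definition}, the scalar empirical-process deviation $T_1(g)=\bigl|\E_{X\mid Y=c}[\|\tilde g(X)\|_2^2]-\tfrac{1}{n_c}\sum_{i\in S_c}\|\tilde g(\vx_i)\|_2^2\bigr|$ and the vector mean-deviation $T_2(g)=\|\mu_c-\hat\mu_c\|_2$. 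The additive $1/n_c$ term inside the stated bracket already emerges at this stage from the bias of the biased variance estimator: a direct moment computation gives $V_c-\E[\hat V_c]=V_c/n_c\le 1/n_c$.

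For each of $T_1$ and $T_2$, I would first apply McDiarmid's bounded-differences inequality to the supremum over $g\in\gG$. Because $\|\tilde g(\vx)\|_2\le 1$, swapping a single sample in $S_c$ perturbs $T_1$ by at most $1/n_c$ and $T_2$ by at most $2/n_c$, yielding a $\sqrt{\log(1/\delta)/n_c}$ concentration tail. Standard symmetrization then bounds the expected supremum by a Rademacher complexity: for $T_1$ this is the scalar Rademacher complexity of the class $\{\vx\mapsto\|\tilde g(\vx)\|_2^2 : g\in\gG\}$, while for $T_2$ symmetrization produces the vector-valued complexity $\E_{\vepsilon}\sup_g\bigl\|\tfrac{1}{n_c}\sum_{i\in S_c}\epsilon_i\tilde g(\vx_i)\bigr\|_2$. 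The former reduces to the latter via Talagrand's contraction applied to the $2$-Lipschitz map $t\mapsto t^2$ on $[0,1]$, and both are then handled by Maurer's vector contraction lemma (with $\phi=\|\cdot\|_2$), which expresses them in terms of the ``stacked'' scalar Rademacher complexity of the component class $\{\vx\mapsto \tilde g_k(\vx)\}_{k=1}^{d_{L-1}}$ without introducing an output-dimension prefactor beyond what $\Pi(g)$ and $\Lambda(g)$ already encode.

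The resulting scalar Rademacher complexity of a depth-$(L-1)$ neural network is then controlled by the Bartlett--Foster--Telgarsky spectral-norm bound, which contributes the leading factor $\Pi(g)\Lambda(g)B_x\log(n_c)\sqrt{\log d_{\max}}/(B_g\sqrt{n_c})$ after accounting for the $1/B_g$ rescaling in the definition of $\tilde g$. Because $\Pi(g)$ and $\Lambda(g)$ are data-dependent and not fixed a priori, the BFT bound cannot be invoked with a single value of $\delta$. I would resolve this via a standard dyadic peeling argument: discretize the range of $\Pi(g)\Lambda(g)$ on a geometric grid, apply BFT at each stratum with a weighted failure probability, and take a union bound. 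Assigning weights $\delta_j \propto \delta/j^2$ to grid level $j$ keeps the total failure probability at $\delta$ while inflating the confidence cost on the realized stratum by the $\sqrt{\log\log(\Pi(g)\Lambda(g))}$ overhead that appears in the statement. A final union bound over the $K$ class labels converts $\delta\mapsto\delta/K$ and produces the $\sqrt{\log(K/\delta)}$ factor.

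The main obstacle will be executing the vector-to-scalar reduction and the peeling in a way that keeps all constants sharp: one must ensure that Maurer's contraction does not inflate the effective complexity beyond $\Pi(g)\Lambda(g)$ (in particular, no extra $\sqrt{d_{L-1}}$ factor sneaks in from treating the augmented network with a unit-norm output vector), and that the peeling grid is coarse enough to give a $\log\log$ rather than $\log$ overhead yet fine enough to match the realized data-dependent complexity tightly. Once these two pieces mesh, combining the McDiarmid tail, the symmetrized Rademacher surrogate, the BFT bound, the peeling overhead, and the union over classes in that order assembles the stated inequality.
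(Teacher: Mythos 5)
Your proposal is correct in substance and shares the paper's core machinery (the Bartlett--Foster--Telgarsky covering-number bound, a Dudley/Rademacher argument, dyadic peeling over the data-dependent complexities with a $\log\log$ overhead, and a union bound over the $K$ classes), but it differs in two places worth comparing. First, the decomposition: you write the variance gap as (second moment) minus (squared mean) and control the scalar deviation $T_1$ and the vector mean deviation $T_2$ separately, whereas the paper works directly with the class of functions $\vx\mapsto\|\tilde g(\vx)-\hat\mu_c\|_2^2$ whose ``center'' $\hat\mu_c$ is the empirical class mean. Your split is arguably cleaner on one technical point: the paper's composed class depends on the sample through $\hat\mu_c$, so invoking a standard uniform-convergence theorem for it requires implicitly enlarging the class to all centers in the unit ball (which the paper's Lipschitz/covering step absorbs, at the cost of some delicacy); your $T_1$ and $T_2$ involve only sample-independent classes. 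Second, the vector-to-scalar reduction: you route through Maurer's vector contraction to a ``stacked'' scalar complexity, and you correctly identify this as the risky step --- Maurer's lemma produces a doubly-indexed Rademacher sum over $n_c\, d_{L-1}$ terms, and it is not automatic that the BFT bound applied to that object avoids an extra dimension or sample-count factor. The paper sidesteps this entirely by covering the \emph{vector-valued} output class in the empirical $\ell_2$ metric (BFT, Theorem 3.3, which is stated for multivariate outputs) and composing with the $4$-Lipschitz scalar map before applying Dudley's entropy integral; if you replace the Maurer step with that direct covering argument, both $T_1$ and $T_2$ fall out of the same entropy integral and no dimension factor appears. Minor remarks: your bias computation $V_c-\E[\hat V_c]=V_c/n_c$ is correct but unnecessary under your direct $T_1+2T_2$ decomposition (in the paper the $1/n_c$ term instead comes from truncating the Dudley integral at $\alpha=4/\sqrt{n_c}$), and peeling on the product $\Pi(g)\Lambda(g)$ rather than on the pair $(\Pi,\Lambda)$ as the paper does is fine, since the covering bound depends only on the product.
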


Note that in $\Pi(g)B_x / B_g$, both the numerator and the denominator represent the output scale of $g$; the numerator is a spectral-norm-based upper bound, while the denominator reflects the actual output scale.
This theorem establishes an $O(1/\sqrt{n_c})$ bound on the gap between the population within-class variance studied earlier and the empirical within-class variance in the training data.
This guarantee justifies focusing on the empirical variance in the subsequent analysis, where we investigate its dynamics as a proxy for the population behavior.

\subsection{Neural Collapse Dynamics}
\label{sec:neural_collapse_dynamics}

Analyzing the variance of class-wise representations in the training data naturally motivates the study of neural collapse.
We therefore begin by defining the neural collapse metrics.
Neural collapse refers to several characteristic properties in the late stage of training: (NC1) the representations collapse to their respective class means; (NC2) the class means form an ETF; and (NC3) each row of $\mW_L$ aligns with the corresponding class mean up to a positive scaling.

Let $\vmu_c = \frac{1}{n_c}\sum_{i \in S_c} g(\vx_i)$ and $\vmu_G = \frac{1}{N} \sum_{i \in S} g(\vx_i)$, and denote their counterparts for $\tilde{g}$ as $\tilde{\vmu}_c$ and $\tilde{\vmu}_G$.
Among neural collapse metrics, we focus on NC1, defined as $\ncone = \frac{\Tr\left( \mSigma_W \right)}{\Tr\left( \mSigma_B \right)}$, where the within-class covariance $\mSigma_W = \frac{1}{N} \sum_{c=1}^K \sum_{i \in S_c} \left( g(\vx_i) - \vmu_c \right) \left( g(\vx_i) - \vmu_c \right)^\top$ and the between-class covariance $\mSigma_B = \frac{1}{K} \sum_{c=1}^K \left( \vmu_c - \vmu_G \right) \left( \vmu_c - \vmu_G \right)^\top$.
Since we consider the within-class variance rescaled instead of dividing by the between-class variance, as appears in \cref{thm:within_class_variance_and_neural_collapse}, we define the following measure as the rescaled NC1 (RNC1):
\begin{align*}
\rncone \coloneqq \frac{1}{B_g^2}\Tr\left( \mSigma_W \right) = \frac{1}{N} \sum_{c=1}^K \sum_{i \in S_c} \left\| \tilde{g}(\vx_i) - \tilde{\vmu}_c \right\|_2^2.
\end{align*}

\begin{remark}[Difference between RNC1 and NC1]
\label{remark:rnc1_and_nc1}
While both metrics are invariant to the scale of $g$, NC1, which is normalized by the between-class variance, does not reduce to zero when all features collapse to a single point.
This reflects class-center separation, a property already captured by NC2, whereas RNC1 isolates the within-class variance aspect more directly than NC1.
\end{remark}

We next specify the training setup for analyzing the dynamics of neural collapse.
The network $f$ is trained by gradient descent with a step size $\eta > 0$.
The loss function is the squared loss with weight decay, controlled by a hyperparameter $\lambda > 0$, and is defined as follows:
\begin{align*}
\vtheta(\tau+1) = \vtheta(\tau) - \eta \nabla_{\vtheta} \widehat{\Ls}_\lambda(\vtheta(\tau)),
\;\;
\widehat{\Ls}_\lambda\left( \vtheta(\tau) \right) = \frac{1}{2} \left\| f_\tau(\mX) - \mY \right\|_F^2 + \frac{\lambda}{2} \sum_{\ell=1}^{L} \left\| \mW_\ell(\tau) \right\|_F^2,
\end{align*}
where $\vtheta$ is the concatenation of all network parameters $\{\mW_\ell\}_{\ell=1}^L$, and $f_\tau$ is the network defined in \cref{eq:network_definition} at time step $\tau$.
Accordingly, we denote the value of RNC1 at time step $\tau$ as $\rncone(\tau)$.

We now analyze the dynamics of neural collapse measured in terms of RNC1, together with the convergence of the training loss.
Our results build on the recent results of \citet{jacot2025wide}, which assume (i) a pyramidal network architecture, (ii) a smooth activation function, and (iii) a specific initialization condition; their formal statements are given in \cref{sec:proof_neural_collapse_dynamics}.
The next theorem establishes that the training loss and RNC1 converge on different time scales.
Here we use the standard Big-O notations $O(\cdot)$ and $\Omega(\cdot)$ to describe the dependence on $\lambda$, $\eta$, $\epsilon_1$, and $\epsilon_2$.

\begin{theorem}[Time scales of neural collapse dynamics]
\label{thm:nc1_dynamics}
Suppose that the network $f$ satisfies \cref{assum:pyramidal_topology,assum:smooth_activation,assum:initialization} and that the input domain $\gX$ is bounded.
Fix $0 < \epsilon_1 < \frac{1}{8}\min_{c \in [K]}n_c$ and $\epsilon_2 > 0$.
For weight decay $\lambda = O(\epsilon_1)$, learning rate $\eta = O(\epsilon_2)$, and time steps $\tau_1 < \tau_2$ satisfying
\begin{align*}
\tau_1 = \Omega\left( \frac{1}{\eta}\log\frac{1}{\epsilon_1} \right),
\quad
\tau_2 = \Omega\left( \frac{1}{\lambda\eta} \log\frac{1}{\epsilon_2} \right),
\end{align*}
the regularized training loss and RNC1 are bounded as
\begin{align*}
\widehat{\Ls}_\lambda\left( \vtheta( \tau_1) \right) \leq \epsilon_1,
\;\;
\rncone(\tau_2) = O\left( \epsilon_1 + \epsilon_2 \right).
\end{align*}
\end{theorem}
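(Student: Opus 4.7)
The plan is to exploit the two-speed structure of gradient descent with weight decay that emerges in the wide pyramidal setting of \citet{jacot2025wide}: a fast ``fitting phase'' driven by the squared-loss gradient on a time scale $1/\eta$, followed by a slow ``shrinking phase'' driven by weight decay on a time scale $1/(\lambda\eta)$. I would decompose the argument accordingly into two parts, one for each conclusion.

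For the first bound, I would first invoke the local convergence result of \citet{jacot2025wide} under \cref{assum:pyramidal_topology,assum:smooth_activation,assum:initialization}, which says that near initialization the linearized (NTK-style) dynamics governs the squared residual $\tfrac{1}{2}\|f_\tau(\mX) - \mY\|_F^2$ and drives it down at a linear rate determined by the smallest NTK eigenvalue. Choosing $\eta = O(\epsilon_2)$ small enough makes the per-step contraction factor of the form $1 - \Theta(\eta)$, so after $\tau_1 = \Omega(\tfrac{1}{\eta}\log\tfrac{1}{\epsilon_1})$ steps the data-fitting term is $O(\epsilon_1)$. Throughout this phase the weight norms remain bounded by the initialization scale (by standard a priori bounds in the wide-pyramidal setup), so the regularizer $\tfrac{\lambda}{2}\sum_\ell \|\mW_\ell(\tau_1)\|_F^2$ is $O(\lambda)$, and the choice $\lambda = O(\epsilon_1)$ guarantees $\widehat{\Ls}_\lambda(\vtheta(\tau_1)) \leq \epsilon_1$.

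For the second bound, I would use the longer-horizon theorem of \citet{jacot2025wide} that describes the minimum-norm attractor: as training continues beyond $\tau_1$, weight decay pulls the balanced product of singular values of the $\mW_\ell$ to minimize $\sum_\ell \|\mW_\ell\|_F^2$ subject to approximately interpolating $\mY$, and this relaxation happens on the slow $1/(\lambda\eta)$ time scale with an exponential rate in $\lambda\eta$. After $\tau_2 = \Omega(\tfrac{1}{\lambda\eta}\log\tfrac{1}{\epsilon_2})$ steps the parameters are within $\epsilon_2$ of this attractor. I would then translate this attractor characterization into an RNC1 bound: because the label matrix $\mY$ has rank $K$ and the attractor is a minimum-norm balanced factorization of a matrix close to $\mY$, the rank of the representation $g_{\tau_2}(\mX)$ collapses to $K$, and each column must lie close to its class mean (otherwise the squared residual within a class could be reduced without increasing any $\|\mW_\ell\|_F$, contradicting min-norm). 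Quantifying this gives $\tfrac{1}{N}\sum_c \sum_{i \in S_c} \|\tilde g(\vx_i) - \tilde{\vmu}_c\|_2^2 = O(\epsilon_1 + \epsilon_2)$, where the $\epsilon_1$ term accounts for the residual misfit carried over from phase one and the $\epsilon_2$ term accounts for not having fully converged to the attractor.

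\textbf{Main obstacle.} The delicate step is the translation from ``parameters are near the balanced minimum-norm attractor'' to the explicit RNC1 bound stated in the theorem. In particular, the normalization by $B_g$ in $\tilde{g}$ must be handled carefully because weight decay shrinks $B_g$ along the slow time scale, so one needs a relative (scale-invariant) control of the within-class dispersion rather than an absolute one; I would expect to rely on a lower bound on $B_g$ at $\tau_2$ coming from the fact that fitting $\mY$ with bounded $\sum_\ell\|\mW_\ell\|_F^2$ forces a minimum feature magnitude. A secondary technical issue is ensuring the phase-one bounds persist into phase two uniformly, which amounts to checking that the smoothness and margin constants from \cref{assum:pyramidal_topology,assum:smooth_activation,assum:initialization} stay controlled as the weights drift along the slow manifold; this should follow from the Lipschitz continuity of the attractor as a function of $\lambda$ in \citet{jacot2025wide}.
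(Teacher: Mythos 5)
Your two-phase, two-time-scale decomposition matches the paper's: a fast fitting phase contracting at rate $1-\Theta(\eta)$ (the paper obtains this from a PL condition for the regularized loss rather than an NTK linearization, but the effect and the resulting $\tau_1 = \Omega(\frac{1}{\eta}\log\frac{1}{\epsilon_1})$ are the same), followed by a slow phase at rate $1-\eta\lambda$ driven by weight decay. Your handling of the regularizer at $\tau_1$ and your identification of the $B_g$ normalization issue --- resolved by a lower bound on $\|g(\vx_i)\|_2$ forced by approximate interpolation of a one-hot target together with an upper bound on $\|\mW_L\|_2$ --- also coincide with what the paper does in \cref{cor:jacot_rnc1}.

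The genuine gap is in the step converting late-phase parameter structure into the RNC1 bound. You route this through $\epsilon_2$-convergence of the full parameter vector to a minimum-norm attractor, followed by a variational argument (``otherwise the within-class residual could be reduced without increasing any $\|\mW_\ell\|_F$''). Two problems. First, what actually contracts at rate $1-\eta\lambda$ in this setting is not the distance to an attractor but the specific balancedness defect $\|\mW_L^\top\mW_L - \mW_{L-1}\mW_{L-1}^\top\|_2$, tracked through its one-step update; claiming the parameters themselves are within $\epsilon_2$ of a min-norm solution after $\Omega(\frac{1}{\lambda\eta}\log\frac{1}{\epsilon_2})$ steps is a much stronger statement than is available, and it is not needed. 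Second, your variational argument is qualitative: it explains why exact min-norm interpolants have collapsed representations, but supplies no quantitative perturbation bound for approximate solutions. The paper instead applies a deterministic lemma (\cref{prop:jacot_within_class}): approximate interpolation $\xi_1$, approximate balancedness $\xi_2$, and norm bounds $r$ together give $\Tr(\mSigma_W) \leq \frac{r^2}{N}\bigl(\frac{\xi_1}{\sigma_K(\mY)-\xi_1} + \sqrt{d_{L-1}\xi_2}\bigr)^2$. The mechanism is that $\mW_L \mZ_{L-1} \approx \mY$ pins down the representations in the row space of $\mW_L$ (where they must be nearly within-class constant because $\mY$ is), while balancedness forces the components orthogonal to that row space to be small. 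Note also that the $\sqrt{\xi_2}$ dependence is exactly what yields $\rncone = O\bigl((\sqrt{\epsilon_1}+\sqrt{\epsilon_2})^2\bigr) = O(\epsilon_1+\epsilon_2)$; your sketch does not produce these exponents. To repair the argument you would need to replace ``closeness to the attractor'' with a tracked quantity that provably contracts at rate $1-\eta\lambda$ and admits such a perturbation lemma --- which is precisely the role balancedness plays, and is also where the condition $\eta = O(\epsilon_2)$ is actually consumed.
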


This theorem not only shows that RNC1 indeed decreases under gradient descent training, but also clarifies the time scales that govern the convergence of the training loss and RNC1.
For the target thresholds $\epsilon_1$ and $\epsilon_2$, both require a logarithmic order of training steps in their reciprocals.
On the other hand, while the convergence of the training loss is independent of the weight decay parameter $\lambda$, the time scale for the convergence of RNC1 grows inversely with smaller $\lambda$.
This implies that $\tau_2$ can be much larger than $\tau_1$ depending on the value of $\lambda$, indicating that the convergence of RNC1 may occur substantially later than that of the training loss.

\begin{remark}[Summary of Theoretical Results]
Up to this point, as summarized in \cref{fig:figure_1}, we have developed our analysis starting from the late-phase phenomena.
We now provide an overall summary of the insights obtained from our theoretical results in \cref{sec:late_phase_training,sec:evolution_of_within_class_variance}.

\begin{description}[leftmargin=1.2em, labelindent=0.5em,topsep=1pt]
\item[Grokking.]
By combining \cref{thm:grokking,thm:within_class_variance_and_neural_collapse}, we showed that the decrease of the empirical within-class variance, namely RNC1, leads to improved test accuracy.
\cref{thm:nc1_dynamics} further established the time scale governing this decrease, jointly with the convergence of the training loss.
In particular, when $\tau_1 \ll \tau_2$ in \cref{thm:nc1_dynamics}, for example with a small weight decay $\lambda$, neural collapse occurs later than the convergence of the training loss, and generalization improvement lags behind fitting training set; that is the grokking behavior.

\item[IB Dynamics.]
For the first fitting phase, \cref{prop:ib_lower_bound} demonstrated that this phase is necessary whenever the network discards input information.
Unlike the first phase of grokking, this fitting phase is not necessarily tied to training loss convergence.
For the second compression phase, \cref{thm:ib_upper_bound,thm:within_class_variance_and_neural_collapse} showed that it proceeds together with the decrease of RNC1, whose convergence and time scale were established in \cref{thm:nc1_dynamics}.

\end{description}
\end{remark}

\section{Experiments}
\label{sec:experiments}

In this section, we conduct experiments to validate the theoretical results in \cref{sec:late_phase_training,sec:evolution_of_within_class_variance}.

\subsection{Grokking}
\label{sec:experiment_grokking}

We first analyze the relationship between grokking, within-class variance, and neural collapse.
Following \citet{liu2023omnigrok}, we train an MLP on the MNIST dataset  \citep{lecun2010mnist}.
The model has four layers with architecture $[784, 200, 200, 200, 10]$ and ReLU activation.
The initialization scale is increased by a factor of $8$ as in \citet{liu2023omnigrok}, and we use the AdamW optimizer \citep{loshchilov2018decoupled} with a learning rate of $1\mathrm{e}{-3}$ and weight decay of $0.01$.
Additional results on other datasets and architectures, including convolutional neural networks (CNNs) and Transformers \citep{vaswani2017attention}, are presented in \cref{sec:additional_experiments_grokking}.
In examining the dynamics of neural collapse, we primarily track the RNC1 score, which is the focus of our theoretical analysis.
As an additional geometric metric, we also report the NC2 score, another geometric aspect of neural collapse. 
We define it as the condition number of the matrix of class mean vectors, $\nctwo = \kappa\left( \begin{pmatrix} \vmu_1, \ldots, \vmu_K \end{pmatrix} \right)$. 
We include NC2 to provide supplementary geometric insight into the arrangement of the class mean vectors.

\begin{figure}[t]
\begin{subfigure}[b]{\textwidth}
    \centering
    \includegraphics[width=0.95\textwidth]{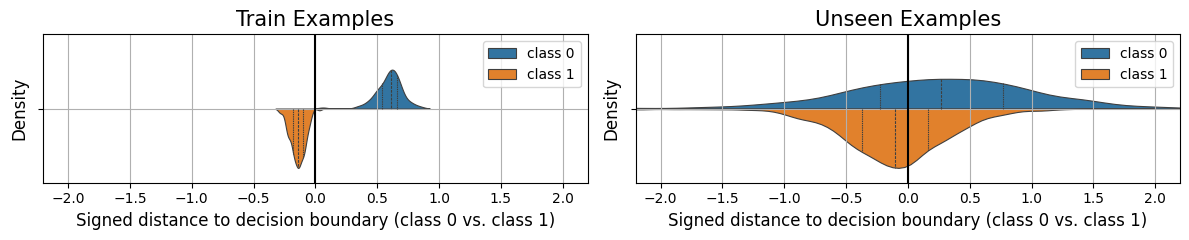}
\label{fig:grokking_boundary_overfit}
\caption{Overfitting phase (time step $\tau_1 = 16{,}000$). Train accuracy = 100\%, test accuracy = 15\%.}
\end{subfigure}
\begin{subfigure}[b]{\textwidth}
    \centering
    \includegraphics[width=0.95\textwidth]{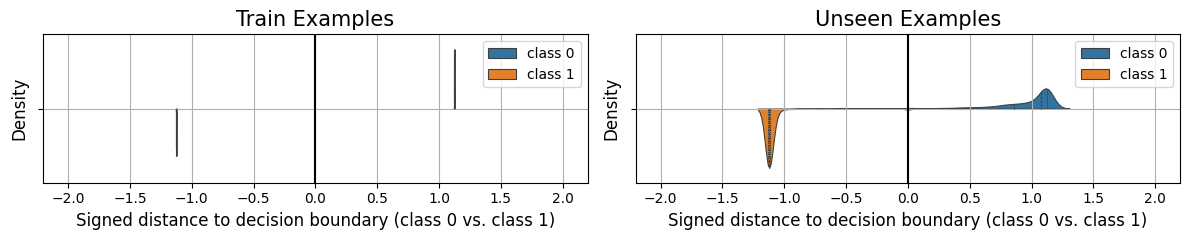}
\label{fig:grokking_boundary_generalize}
\caption{Convergence phase (time step $\tau_2 = 100{,}000$). Train accuracy = 100\%, test accuracy = 88\%.}
\end{subfigure}
\caption{
Margins of individual examples at two time steps during grokking.
The margin of each example is defined as the signed distance from its representation to the decision boundary determined by the last-layer classifier, calculated as $\left( \langle \vw_0 - \vw_1, g(\vx) \rangle + b_0 - b_1 \right) / \| \vw_0 - \vw_1 \|_2$, where $b_c$ denotes the bias term for class $c$.
We trained a 4-layer MLP on the MNIST dataset.
These results reveal the link between representation variance and generalization, supporting \cref{thm:grokking,thm:within_class_variance_and_neural_collapse}.
We additionally provide similar visualizations for several other class pairs in \cref{sec:additional_experiments_grokking}.
}
\label{fig:grokking_boundary}
\end{figure}

We begin by motivating our approach of analyzing grokking through the lens of representation learning.
\cref{fig:grokking_boundary} shows how the representations of training and unseen examples evolve in a setting where grokking occurs.
In the overfitting phase (top), the training examples are separated, but the unseen examples exhibit large within-class variance despite their mean shifting toward the correct class, resulting in many misclassifications.
As training proceeds, the training examples become further separated and collapse into single points (bottom).
At this stage, as discussed in \cref{thm:within_class_variance_and_neural_collapse}, the collapse of the training representations is to some extent inherited by the underlying distribution, leading to the reduction of the population within-class variance, as illustrated in the right panel. 
Consequently, test accuracy improves, and grokking emerges in a manner consistent with the generalization bound established in \cref{thm:grokking}.

\begin{figure}[t]
\centering
\includegraphics[width=0.95\textwidth]{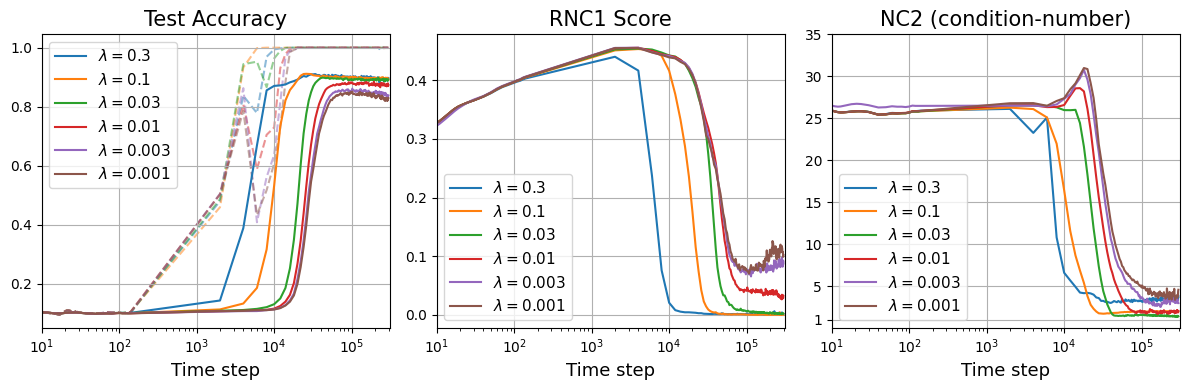}
\caption{
Dynamics of test accuracy, RNC1, and NC2 scores throughout training for different weight decay coefficients $\lambda$.
In the test accuracy panel (left), the training accuracy is additionally shown in dashed lines of the same color to visualize grokking behavior.
Results are averaged over five different seeds with an MLP trained on the MNIST dataset.
These results demonstrate the connection between neural collapse and grokking, and their time scales, supporting \cref{thm:grokking,thm:within_class_variance_and_neural_collapse,thm:nc1_dynamics}.
}
\label{fig:grokking_weight_decay}
\end{figure}

Next, \cref{fig:grokking_weight_decay} presents the grokking dynamics as well as those of RNC1 and NC2 scores under different weight decays, which further supports our analysis in two major respects.
\textbf{First}: the decrease of RNC1 is synchronized not with fitting the training set but with the emergence of grokking, i.e., the improvement of test accuracy.
This phenomenon consistently appears across all weight-decay settings, reinforcing our theoretical result that explains grokking through the progression of neural collapse.
Although NC2 is not directly treated in our analysis, it exhibits almost the same behavior as RNC1 and converges toward $1$, indicating the emergence of neural collapse.
\textbf{Second}: stronger weight decay $\lambda$ accelerates the timing of grokking and narrows the gap between training accuracy saturation and generalization.
This observation aligns with our main result linking grokking to RNC1 dynamics and also supports \cref{thm:nc1_dynamics}, which shows that the convergence of the RNC1 score becomes faster as the weight decay increases.

\subsection{IB Dynamics}
\label{sec:experiment_ib_dynamics}

\begin{figure}[t]
\centering
\includegraphics[width=0.95\textwidth]{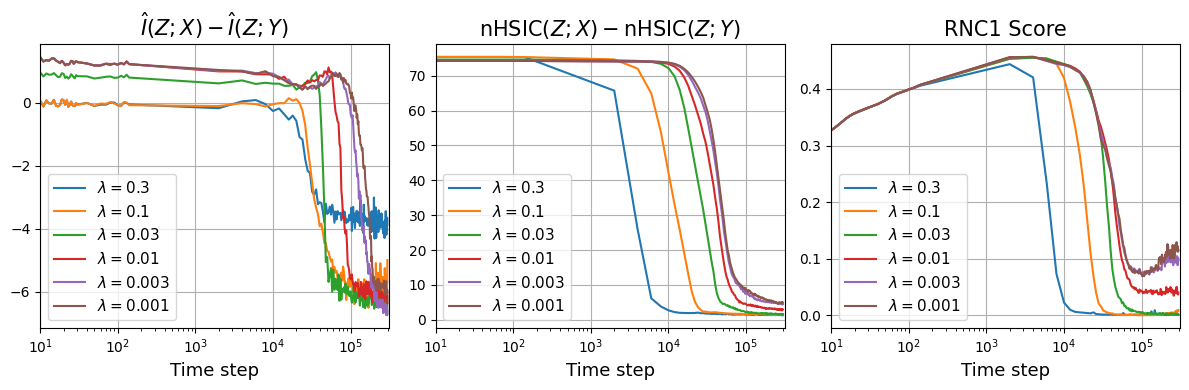}
\caption{
Dynamics of redundant information in IB framework (estimated via MI and nHSIC) and RNC1 scores throughout training for different weight decay $\lambda$.
Results are averaged over five different seeds with an MLP trained on the MNIST dataset.
These results show the connection between neural collapse and IB dynamics, as well as their time scales, supporting \cref{thm:ib_upper_bound,thm:within_class_variance_and_neural_collapse,thm:nc1_dynamics}.
}
\label{fig:ib_weight_decay}
\end{figure}

In this section, we conduct experiments on IB dynamics.
Using the same setup as in the previous experiments, we measure how the MI between the learned representation $Z$ and the input $X$, as well as between $Z$ and the target $Y$, evolves during training.
A fundamental difficulty in IB experiments with DNNs is that, beyond toy settings, conventional estimators based on binning or kernel density estimation fail to provide accurate estimates in high-dimensional input or representation due to the curse of dimensionality.
This challenge remains an active research area, and in this work, we adopt the recent dimensionality-reduction-based MI estimator of \citet{butakov2024information}.
Specifically, we first compress the variables into four dimensions and then estimate each entropy term of MI using the k-NN-based Kozachenko-Leonenko method \citep{kozachenko1987sample,berrett2019efficient}.
We denote this estimate by $\hat{I}$ and provide the details in \cref{sec:experimental_details}.
In addition, to support the reliability of our experimental findings and to address the inherent difficulty of MI estimation, we use the normalized Hilbert-Schmidt independence criterion (nHSIC) \citep{gretton2005measuring}, a proxy widely adopted in information-theoretic analysis of DNNs.
Please see \cref{sec:experimental_details} for the background.

\cref{fig:ib_weight_decay} shows, under the same setting as in \cref{fig:grokking_weight_decay}, the behavior of redundant information in the IB principle discussed in \cref{sec:ib_dynamics} together with the corresponding RNC1 score.
As shown in the grokking experiments, the decrease of the RNC1 score in the later training stage occurs earlier when the weight decay is stronger (right), which is consistent with \cref{thm:nc1_dynamics}.
\cref{thm:ib_upper_bound,thm:within_class_variance_and_neural_collapse} establish that this decrease in the RNC1 score contributes to the reduction of the redundant information, and the figure demonstrates this result.
The left figure shows MI estimates, indicating that the stronger weight decay accelerates the decrease of redundant information.
Although the information reduction is slightly delayed for large weight decay values ($\lambda = 0.3, 0.1$), the decrease of RNC1 scores actually leads to the reduction of redundant information. 
Since MI is estimated by decomposing it into differential entropy terms that are estimated separately, the resulting MI estimates can take negative values despite the non-negativity of MI, while still capturing the overall decreasing trend.
To further support our findings, we also include the results using nHSIC to measure superfluous information (middle). 
This result exhibits the same qualitative behavior as the RNC1 score and corroborates our theoretical analysis of the IB dynamics.
Additional results for other model architectures and datasets are provided in \cref{sec:additional_experiments_ib}.

\section{Conclusion}
In this work, we focus on grokking and IB dynamics as two representative late-phase phenomena of DNNs, whose mechanisms have been elusive.
We show that both phenomena can be explained in terms of the population within-class variance of the learned representations, and more specifically, by the progression of neural collapse and its associated time scale.
These theoretical findings are supported by our experiments.
Beyond the theoretical perspective, our results also provide practical implications: tracking quantities such as the rescaled within-class variance can help determine when continued training will be beneficial, and weight decay can accelerate the transition to this late-phase regime.
A natural next step is to extend the time-scale analysis of neural collapse to other architectures beyond MLP or to different initialization methods.
Another interesting direction is to analyze the possibility of neural collapse that implicitly arises without weight decay.

\section{Acknowledgments}
This work was supported by JSPS KAKENHI Grant Number JP24H00709.
We thank the reviewers for their constructive comments and the members of our laboratory for helpful discussions.

\bibliography{iclr2026_conference}
\bibliographystyle{iclr2026_conference}

\newpage
\appendix

\section{Further Comparison with Related Work}

As a concurrent work to ours, we compare our results with \citet{han2025flatness} in this section.
Since the title may appear to contradict our claims at first glance, we clarify that their findings are fully consistent with ours and highlight the novelty of our contribution.
\citet{han2025flatness} discusses grokking in relation to neural collapse, which makes their setting similar to ours. 
Although the title suggests that neural collapse might not be relevant for generalization, this is not what the paper actually argues. 
Rather, they state that neural collapse appears earlier than the acquisition of generalization. 
The emergence of neural collapse does not align with that of generalization, and therefore it does not fully explain generalization ability. 
In contrast, they observe that flatness correlates more closely with the decrease in test loss and thus appears to offer a better explanation in their experiments.
However, this empirical observation is entirely consistent with our analysis. 
The apparent contradiction arises because \citet{han2025flatness} measures neural collapse using the neural collapse clustering (NCC) metric, which mixes multiple properties associated with neural collapse. 
In particular, NCC also captures the separation of class mean representations that naturally occurs through fitting the training set, and therefore it decreases earlier than the test loss. 
As we emphasize in \cref{remark:rnc1_and_nc1}, our analysis isolates the contribution of within-class variance or RNC1 score, allowing us to separate out the cause of the generalization and correctly focus on the variance decrease that is actually responsible for the emergence of generalization in grokking.
Furthermore, the theoretical analysis in \citet{han2025flatness} is limited to showing that flatness is partially guaranteed under neural collapse.
Their work addresses neither the connection to generalization nor the associated training dynamics. 
In contrast, as illustrated in \cref{fig:figure_1}, our paper clarifies the relationships among several concepts, including information bottleneck, and it further characterizes their training dynamics through a neural-collapse-based analysis. 
This leads to a unified understanding of the mechanisms underlying grokking.

\section{Proof of Main Results}

\subsection{Grokking Results}
\label{sec:proof_grokking}

We first provide the following lemma, which is useful for bounding the tail probability with respect to the variance of the random variable.
\begin{lemma}[Cantelli's inequality]
\label{lem:cantelli_inequality}
Let $X$ be a real-valued random variable with mean $\E[X]$ and variance $\sigma^2$.
Then, for any $\lambda > 0$, we have
\begin{align*}
\Pr\left( X \geq \E[X] + \lambda \right) \leq \frac{\sigma^2}{\sigma^2 + \lambda^2}.
\end{align*}
\end{lemma}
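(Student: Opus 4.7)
The plan is to reduce the tail bound to an application of Markov's inequality on a squared, shifted version of the random variable, and then optimize the free parameter. Specifically, I would first center the variable by setting $Y = X - \E[X]$, so that $\E[Y]=0$, $\Var(Y)=\sigma^2$, and the goal becomes bounding $\Pr(Y \geq \lambda)$ by $\sigma^2/(\sigma^2+\lambda^2)$.

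Next, I would introduce an auxiliary parameter $u \geq 0$ and observe that since $\lambda+u>0$, the event $\{Y \geq \lambda\}$ is contained in $\{(Y+u)^2 \geq (\lambda+u)^2\}$. Applying Markov's inequality to the nonnegative random variable $(Y+u)^2$ then gives
\begin{equation*}
\Pr(Y \geq \lambda) \;\leq\; \frac{\E[(Y+u)^2]}{(\lambda+u)^2} \;=\; \frac{\sigma^2 + u^2}{(\lambda+u)^2},
\end{equation*}
where I used $\E[Y]=0$ to expand $\E[(Y+u)^2] = \sigma^2 + u^2$.

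The final step is to minimize the right-hand side over $u \geq 0$. Differentiating $(\sigma^2+u^2)/(\lambda+u)^2$ with respect to $u$ and setting the derivative to zero yields the optimal choice $u^{\star} = \sigma^2/\lambda$, which lies in the admissible range since $\lambda>0$. Substituting $u^{\star}$ back into the bound and simplifying collapses the expression to $\sigma^2/(\sigma^2+\lambda^2)$, completing the proof.

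I do not foresee any substantive obstacle: the argument is elementary once the right squaring trick is used, and the only technical care is to ensure $\lambda+u>0$ so that the squaring step preserves the event inclusion rather than enlarging it unnecessarily. The optimization in $u$ is a single-variable calculus exercise whose critical point is guaranteed to be the minimizer by the sign of the derivative (negative for $u<\sigma^2/\lambda$, positive for $u>\sigma^2/\lambda$).
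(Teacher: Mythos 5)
Your proof is correct and is the standard argument for Cantelli's inequality: center the variable, use the inclusion $\{Y \geq \lambda\} \subseteq \{(Y+u)^2 \geq (\lambda+u)^2\}$ for $u \geq 0$, apply Markov's inequality, and optimize over $u$ to get $u^\star = \sigma^2/\lambda$. The paper states this lemma as a classical result without proof, so there is nothing to compare against; your derivation fills that gap correctly, including the verification that the critical point is a minimizer.
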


Using this lemma, we can prove \cref{thm:grokking} as follows.

\begin{proof}[Proof of \cref{thm:grokking}]
The test error of the model prediction is bounded as follows:
\begin{align}
&\Pr\left( \argmax_{i \in [K]}\left\{ f(\vx)_i \right\} \neq y \right) \\
&= \E_{c \sim P_Y} \E_{\vx \sim P_{X | Y = c}} \left[ \1_{\argmax_{i \in [K]} \left\{ f(\vx)_i \right\} \neq c} \right] \\
&= \sum_{c=1}^K p_Y(c) \cdot \Pr\left( \exists k \in [K] \setminus \{c\} \text{ s.t. } \left\langle g(\vx), \vw_k \right\rangle > \left\langle g(\vx), \vw_c \right\rangle \mid Y = c \right) \\
\label{eq:grokking_union_bound}
&\leq \sum_{c=1}^K p_Y(c) \sum_{k \neq c} \Pr\left( \left\langle g(\vx), \vw_k - \vw_c \right\rangle > 0 \mid Y = c \right),
\end{align}
where we used union bound in the last argument.
From \cref{lem:cantelli_inequality}, the term in the last line can be further bounded as follows:
\begin{align}
&\Pr\left( \left\langle g(\vx), \vw_k - \vw_c \right\rangle > 0 \mid Y = c \right) \\
&= \Pr\left( \left\langle g(\vx) - \E_{X | Y=c}\left[ g(\vx) \right], \vw_k - \vw_c \right\rangle  > \left\langle \E_{X | Y=c}\left[ g(\vx) \right], \vw_c - \vw_k \right\rangle \mid Y = c \right) \\
\label{eq:grokking_cantelli}
&\leq
\begin{cases}
\frac{ \left( \vw_c - \vw_k \right)^\top \mSigma_c \left( \vw_c - \vw_k \right)}{\left( \vw_c - \vw_k \right)^\top \mSigma_c \left( \vw_c - \vw_k \right) + \left\langle \E_{X | Y=c}\left[ g(\vx) \right], \vw_c - \vw_k \right\rangle^2 } & \text{ if } \left\langle \E_{X | Y=c}\left[ g(\vx) \right], \vw_c - \vw_k \right\rangle > 0, \\
1 &\text{ otherwise},
\end{cases}
\end{align}
where $\mSigma_c$ is given by $\mSigma_c = \E_{X \mid Y = c} \left[ \left(g(X) - \E_{X \mid Y = c}[g(X)] \right)\left( g(X) - \E_{X \mid Y = c}[g(X)] \right)^\top \right]$.
In both cases, \cref{eq:grokking_cantelli} can be rewritten as
\begin{align}
\label{eq:grokking_bound_tmp}
\Pr\left( \left\langle g(\vx), \vw_k - \vw_c \right\rangle > 0 \mid Y = c \right)
\leq
\left( 1 + \frac{\max\left\{ \langle \E_{X \mid Y = c}[g(X)], \vw_c - \vw_k \rangle, 0 \right\}^2}{ \left( \vw_c - \vw_k \right)^\top \mSigma_c \left( \vw_c - \vw_k \right) } \right)^{-1}.
\end{align}
Here, applying the Cauchy-Schwarz inequality to the variance term yields
\begin{align}
\left( \vw_c - \vw_k \right)^\top \mSigma_c \left( \vw_c - \vw_k \right)
&= \E_{X \mid Y = c} \left[ \left\langle g(X) - \E_{X \mid Y=c}\left[ g(X) \right], \vw_c - \vw_k \right\rangle^2 \right] \\
\label{eq:grokking_denom}
&\leq 
\E_{X \mid Y = c} \left[ \left\| g(X) - \E_{X \mid Y=c}\left[ g(X) \right] \right\|_2^2 \right] \left\| \vw_c - \vw_k \right\|_2^2.
\end{align}
By combining \cref{eq:grokking_union_bound,eq:grokking_bound_tmp,eq:grokking_denom}, we obtain the desired result:
\begin{align*}
\Pr\left( \argmax_{i \in [K]} \left\{ f(\vx)_i \right\} \neq y \right)
&\leq
\sum_{c=1}^K p_Y(c) \sum_{k \neq c}
\left( 1 + \frac{\max\left\{ \left\langle \E_{X \mid Y = c}[g(X)], \frac{\vw_c - \vw_k}{\left\| \vw_c - \vw_k \right\|_2} \right\rangle, 0 \right\}^2}{ \E_{X \mid Y = c} \left[ \left\| g(X) - \E_{X \mid Y = c}[g(X)] \right\|_2^2 \right] } \right)^{-1} \\
&=
\sum_{c=1}^K p_Y(c) \sum_{k \neq c}
\left( 1 + \frac{\max\left\{ \left\langle \E_{X \mid Y = c}[\tilde{g}(X)], \frac{\vw_c - \vw_k}{\left\| \vw_c - \vw_k \right\|_2} \right\rangle, 0 \right\}^2}{ \E_{X \mid Y = c} \left[ \left\| \tilde{g}(X) - \E_{X \mid Y = c}[\tilde{g}(X)] \right\|_2^2 \right] } \right)^{-1},
\end{align*}
where the last line follows from \cref{def:population_within_class_variance} and dividing both the numerator and denominator by $B_g^2 = \sup_{\vx \in \gX} \| g(\vx) \|_2^2$.
\end{proof}

\subsection{IB Dynamics}
\label{sec:proof_ib_dynamics}

\subsubsection{Proof of \cref{prop:ib_lower_bound}}

\begin{proof}[Proof of \cref{prop:ib_lower_bound}]
By definition, $Z$ is obtained by adding noise to $g(X)$, so the Markov chain $Y \to X \to g(X) \to Z$ holds.
The first inequality $I(Z; X) \geq I(Z; Y)$ follows from the Markov chain $Y \to X \to Z$ and the data processing inequality (DPI).
For the second inequality, we use a variational approach.
From a definition of MI, we have
\begin{align}
I(Z; Y) = \int dy \, d\vz \, p(y, \vz) \log \frac{p(y,\vz)}{p(y)p(\vz)} = \int dy \, d\vz \, p(y, \vz) \log \frac{p(y|\vz)}{p(y)}.
\end{align}
Here, we introduce a variational approximation $q(y|\vz)$ for the conditional distribution $p(y|\vz)$.
From the non-negativity of the KL divergence, we have
\begin{align}
I(Z; Y) \geq \int dy \, d\vz \, p(y, \vz) \log \frac{q(y|\vz)}{p(y)}
= \int dy \, d\vz \, p(y,\vz) \log q(y|\vz) + H(Y).
\end{align}
We model the variational approximation as a softmax function with respect to the last layer output $\mW \vz$, i.e., $q(y|\vz) = \exp\left( (\mW \vz)_y \right) / \sum_{c=1}^K \exp\left( (\mW \vz)_c \right)$, leading to
\begin{align}
I(Z; Y) \geq - \E_{(y,\vz) \sim (Y, Z)}\left[ \ell_{CE}(y, \vz) \right] + H(Y).
\end{align}
Since the entropy of the target $Y$ is a constant, we conclude the desired inequality.
\end{proof}

\subsubsection{Proof of \cref{thm:ib_upper_bound}}
Before moving on to the proof of \cref{thm:ib_upper_bound}, we provide the following lemma, which states that the Gaussian distribution maximizes the entropy among all distributions with the same covariance.
\begin{lemma}[\cite{cover2006elements}, Theorem 8.6.5]
\label{lem:entropy_upper_bound}
Let the random vector $X \in \R^d$ have zero mean and covariance matrix $\mSigma = \E[XX^\top]$.
Then, we have $h(X) \leq \frac{1}{2} \log\left\{ \left( 2\pi e \right)^d \det (\mSigma) \right\}$, with equality if and only if $X \sim N(\bm{0}, \mSigma)$.
\end{lemma}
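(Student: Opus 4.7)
The plan is to use the non-negativity of the Kullback--Leibler divergence between the law of $X$ and the Gaussian with matching covariance; this is the standard route to any maximum-entropy result over a moment-constraint family. Assume $X$ admits a density $f$ on $\R^d$ (otherwise $h(X) = -\infty$ and the inequality is trivial), and let $\phi$ denote the density of $N(\vzero, \mSigma)$ when $\mSigma \succ \vzero$. Applying $D_{\mathrm{KL}}(f \,\|\, \phi) \geq 0$ immediately gives
\begin{align*}
h(X) = -\E_f[\log f(X)] \leq -\E_f[\log \phi(X)],
\end{align*}
with equality iff $f = \phi$ almost everywhere. The task then reduces to computing the right-hand cross-entropy.

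Next I would expand $-\log \phi(\vx) = \tfrac{1}{2}\bigl(d \log(2\pi) + \log \det \mSigma + \vx^\top \mSigma^{-1} \vx\bigr)$ and take expectation under $f$. The crucial observation is that the quadratic form averages to the dimension regardless of the particular distribution: using linearity of trace and the covariance constraint,
\begin{align*}
\E_f\bigl[X^\top \mSigma^{-1} X\bigr] = \Tr\bigl(\mSigma^{-1}\,\E_f[X X^\top]\bigr) = \Tr\bigl(\mSigma^{-1} \mSigma\bigr) = d.
\end{align*}
Substituting this back collapses $-\E_f[\log \phi(X)]$ to $\tfrac{1}{2}\log\bigl\{(2\pi e)^d \det \mSigma\bigr\}$, which is exactly the claimed upper bound. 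The equality clause follows since the inequality was tight only through the KL step, and the KL divergence vanishes precisely when the two distributions agree almost everywhere, pinning $X \sim N(\vzero, \mSigma)$.

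I do not anticipate a genuine obstacle; this is essentially the textbook derivation of the Gaussian maximum-entropy property. The only care needed is to treat the degenerate case where $\mSigma$ is singular: then $X$ is supported on a strict subspace of $\R^d$ and has no density against Lebesgue measure, so the left-hand side is $-\infty$ while $\log \det \mSigma = -\infty$ also sends the right-hand side to $-\infty$, and the inequality holds vacuously (with the equality statement interpreted as $X$ being Gaussian on the range of $\mSigma$). The proof otherwise uses only the covariance hypothesis on $X$, not any distributional structure, which is exactly why the bound is universal.
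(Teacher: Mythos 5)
Your proposal is correct and is precisely the standard argument behind the cited result (Cover and Thomas, Theorem 8.6.5), which the paper invokes without reproducing: non-negativity of $D_{\mathrm{KL}}$ against the matching Gaussian, plus the trace identity $\E[X^\top \mSigma^{-1} X] = \Tr(\mSigma^{-1}\mSigma) = d$ to evaluate the cross-entropy. The handling of the singular-covariance case is a sensible extra care but not needed for the paper's application, where the added isotropic noise makes $\mSigma_{Z|Y=y}$ strictly positive definite.
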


With this lemma, we show the proof of \cref{thm:ib_upper_bound}.

\begin{proof}[Proof of \cref{thm:ib_upper_bound}]

Using the Markov chain $Y \to X \to Z$ and the chain rule of MI, we have
\begin{align}
I(Z; X) &= I(Z; X, Y) = I(Z; Y) + I(Z; X | Y),
\end{align}
leading to the first equality.
Rewriting the conditional MI with the differential entropies, we have
\begin{align}
\label{eq:ib_conditional_mi}
I(Z; X | Y) &= h(Z | Y) - h(Z | X, Y)
= h(Z | Y) - h(Z | X),
\end{align}
which again follows from the Markov chain.
For the second term, we use the differential entropy of Gaussian distribution \citep[Theorem 8.4.1]{cover2006elements}, leading to
\begin{align}
\label{eq:ib_conditional_mi_second_term}
h(Z | X)
&= h\left( g(X) + B_g E | X \right)
= h(B_g E | X)
= \frac{d_{\mathrm{rep}}}{2} \left( 1 + \log(2 \pi B_g^2 \sigma^2) \right).
\end{align}
The conditional covariance matrix of $Z$ given $Y= y$ is given by
\begin{align}
\mSigma_{Z | Y = y}
&= \Cov_{X | Y = y}\left[ g(X) \right] + B_g^2 \sigma^2 \mI_{d_{\mathrm{rep}}}.
\end{align}

Then, from \cref{lem:entropy_upper_bound}, the first term in \cref{eq:ib_conditional_mi} can be computed as follows:
\begin{align}
&h(Z | Y) \nonumber \\
&= \E_{y \sim Y} \left[ h(Z | Y = y) \right] \\
&\leq \frac{1}{2} \E_{y \sim Y} \left[ \log \left\{ (2\pi e)^{d_{\mathrm{rep}}} \det(\mSigma_{Z | Y = y}) \right\} \right] \\
&= \frac{1}{2}\left( d_{\mathrm{rep}} \left( 1 + \log (2\pi B_g^2 \sigma^2) \right) + \E_{y \sim Y}\left[ \log \det \left( \frac{\Cov_{X | Y = y}\left[ g(X) \right] }{B_g^2 \sigma^2} + \mI_{d_{\mathrm{rep}}} \right) \right] \right) \\
\label{eq:ib_conditional_mi_first_term_upper}
&\leq \frac{1}{2}\left( d_{\mathrm{rep}} \left( 1 + \log (2\pi B_g^2 \sigma^2) \right) + \E_{y \sim Y}\left[ \Tr \left( \frac{ \Cov_{X | Y = y}\left[ g(X) \right]  }{B_g^2 \sigma^2}  \right) \right] \right),
\end{align}
where the last inequality follows from the fact that $\log \det (\mA + \mI) \leq \Tr (\mA)$ for any positive semi-definite matrix $\mA$.
By putting \cref{eq:ib_conditional_mi_second_term} and \cref{eq:ib_conditional_mi_first_term_upper} into \cref{eq:ib_conditional_mi}, we have
\begin{align}
I(Z; X | Y)
&\leq \frac{1}{2B_g^2 \sigma^2}\E_{y \sim Y}\left[ \Tr \left( \Cov_{X | Y = y}\left[ g(X) \right] \right) \right] \\
&= \frac{1}{2\sigma^2}\E_{(\vx, y) \sim (X, Y)}\left[ \left\| \tilde{g}(\vx) - \E_{\vx \sim X | Y = y}\left[ \tilde{g}(\vx) \right] \right\|_2^2 \right],
\end{align}
which concludes the proof.
\end{proof}

\subsubsection{Remark on Upper-bound Tightness}
In \cref{thm:ib_upper_bound}, the redundant information $I(Z; X \mid Y)$ is upper-bounded with a population within-class variance.
In this section, we discuss how informative this upper bound is.

From the proof of \cref{thm:ib_upper_bound}, we obtain the following tight bound on $I(Z; X \mid Y)$:
\begin{align}
\label{eq:ib_tight_upper_bound}
I(Z; X \mid Y)
&\leq \frac{1}{2}\E_{y \sim Y}\left[ \log\det\left( \frac{ \Cov_{X \mid Y = y}[\tilde{g}(X)] }{\sigma^2} + \mI \right) \right].
\end{align}
This bound is derived from \cref{lem:entropy_upper_bound}, and equality holds when $Z \mid Y = y$ follows the Gaussian distribution.
Therefore, this upper bound is tight.
In the subsequent step of the proof, we used the inequality $\log\det(\mA + \mI) \leq \Tr(\mA)$.
Since the equality does not hold when $\mA$ is positive definite, the resulting bound in \cref{thm:ib_upper_bound} is not tight.
However, regarding the within-class variance term appearing in the theorem, we can show that reducing the variance in each coordinate always decreases both the tight bound above and the upper bound in \cref{thm:ib_upper_bound}

\begin{proposition}
\label{prop:ib_upper_bound_tightness}
Reducing the population within-class variance in any single coordinate, i.e., $\E_{X\mid Y=y}\left[ \left( \tilde{g}(X) - \E_{X \mid Y=y}[\tilde{g}(X) ] \right)_i^2 \right]$ for any $i \in [d_{rep}]$ and $y \in [K]$, strictly decreases both the upper bound in \cref{thm:ib_upper_bound} and the tight upper bound on $I(Z; X \mid Y)$ in \cref{eq:ib_tight_upper_bound}.
\end{proposition}
\setcounter{theorem}{2}
\begin{proof}[Proof of \cref{prop:ib_upper_bound_tightness}]
For the upper bound in \cref{thm:ib_upper_bound}, since it can be written as
\begin{align}
\sum_{i \in [d_{rep}]} 
\E_Y \E_{X \mid Y} \left[ \left( \tilde{g}(X)  - \E_{X\mid Y}[ \tilde{g}(X) ] \right)_i^2 \right],
\end{align}
it is obvious that decreasing any summand decreases the upper bound.
We now show that the tight upper bound in \cref{eq:ib_tight_upper_bound} also decreases under such a perturbation.
Let the amount of the variance reduction along coordinate $i$ be $\delta > 0$.
For notational simplicity, we denote the original covariance matrix by $\mK = \Cov_{X\mid Y}[\tilde{g}(X)]$ and the perturbed covariance by $\mK^\prime = \mK - \delta \ve_i\ve_i^\top$.
Then, the value of the upper bound is bounded below as
\begin{align}
&\frac{1}{2}\E_Y\left[ \log\det \left( \frac{\mK^\prime +  \delta \ve_i\ve_i^\top}{\sigma^2} + \mI \right) \right] \\
&=
\frac{1}{2}\E_Y\left[ \log \left\{ \left( 1 + \frac{\delta}{\sigma^2}\ve_i^\top \left( \frac{\mK^\prime}{\sigma^2} + \mI \right)^{-1} \ve_i  \right)  \det\left( \frac{\mK^\prime}{\sigma^2} + \mI \right) \right\} \right] \\
&>
\frac{1}{2}\E_Y\left[ \log \det\left( \frac{\mK^\prime}{\sigma^2} + \mI \right) \right],
\end{align}
where the equality holds from the matrix determinant lemma, and the inequality follows from the fact that the diagonal element of the inverse of a positive definite matrix is always positive.
Since the right-hand side is exactly the upper bound in \cref{eq:ib_tight_upper_bound} evaluated at the perturbed covariance, this completes the proof.
\end{proof}

\subsection{Concentration of Within-Class Variance}
\label{sec:proof_concentration_within_class_variance}

The analysis in \cref{sec:within_class_variance_and_neural_collapse} is carried out using Rademacher complexity, which is a standard tool for establishing uniform convergence and deriving generalization bounds.
For a real-valued function class $\gF$ and a fixed training set $S$, the empirical Rademacher complexity $\widehat{\mathfrak{R}}_n(\gF)$ is defined as $\E_\vepsilon \left[ \sup_{f \in \gF} \frac{1}{n} \sum_{i=1}^n \epsilon_i f(\vx_i) \right]$, where $\epsilon_i$ are independently sampled from $\text{Unif}\left( \{ \pm 1 \} \right)$.

\cref{thm:within_class_variance_and_neural_collapse} is inspired by \citet[Proposition 3]{galanti2022on}.
In contrast to their result, the following theorem addresses the three novel aspects: (i) a uniform convergence result for within-class variance, (ii) a refined design of the failure probabilities and the union bound, and (iii) a precise and tight upper bound on Rademacher complexity.

\begin{proof}[Proof of \cref{thm:within_class_variance_and_neural_collapse}]
We first define the function classes of DNNs as follows:
\begin{gather*}
\gG = \left\{ g : \R^d \to \R^{d_{\mathrm{rep}}} \mid g(\vx) = \sigma\left( \mW_{L-1} \sigma \left( \cdots \sigma\left( \mW_1 \vx \right) \right) \right), \mW_\ell \in \R^{d_{\ell} \times d_{\ell-1}}, \ell \in [L-1] \right\}, \\
\gG_{s,t} = \left\{ g \in \gG \ \middle\vert\  \Pi(g) = \prod_{\ell=1}^{L-1} \| \mW_\ell \|_2 \leq 2^{s}, \, \Lambda(g) =  \left( \sum_{\ell=1}^{L-1} \left( \frac{ \| \mW_\ell \|_{2,1} }{\| \mW_\ell \|_2} \right)^{2/3} \right)^{3/2} \leq 2^{t}  \right\},
\end{gather*}
for $s, t \in \N$.
For each of these classes, we define corresponding rescaled classes as follows:
\begin{align*}
\tilde{\gG} = \left\{ \tilde{g} \mid \tilde{g}(\vx) = \frac{g(\vx)}{\sup_{\vx \in \gX} \| g(\vx) \|_2}, g \in \gG \right\},
\;
\tilde{\gG}_{s,t} = \left\{ \tilde{g} \mid \tilde{g}(\vx) = \frac{g(\vx)}{\sup_{\vx \in \gX} \| g(\vx) \|_2}, g \in \gG_{s,t} \right\}.
\end{align*}

Then, we have $\gG = \bigcup_{s,t} \gG_{s,t}$ and $\tilde{\gG} = \bigcup_{s,t} \tilde{\gG}_{s,t}$.

We first fix $c \in [K]$ and $s, t \in \N$.
For any $\tilde{g} \in \tilde{\gG}_{s,t}$, we have
\begin{align}
&\E_{X \mid Y=c} \left[ \left\| \tilde{g}(X) - \E_{X \mid Y=c}\left[ \tilde{g}(X) \right] \right\|_2^2 \right] \nonumber \\
&\quad =
\E_{X \mid Y=c} \left[ \left\| \tilde{g}(X) - \frac{1}{n_c} \sum_{j \in S_c} \tilde{g}(\vx_j) \right\|_2^2 - \left\| \frac{1}{n_c}\sum_{j \in S_c} \tilde{g}(\vx_j) - \E_{X\mid Y=c}\left[ \tilde{g}(X) \right] \right\|_2^2 \right] \\
\label{eq:within_class_variance_bound_first_eq}
&\quad \leq
\E_{X \mid Y=c} \left[ \left\| \tilde{g}(X) - \frac{1}{n_c} \sum_{j \in S_c} \tilde{g}(\vx_j) \right\|_2^2 \right].
\end{align}
In the following, we will analyze the gap between \cref{eq:within_class_variance_bound_first_eq} and its empirical counterpart.
For any fixed $\tilde{g} \in \tilde{\gG}_{s,t}$, we define a function $h: \R^{d_{\mathrm{rep}}} \to \R$ as $h(\vz) = \left\| \vz - \frac{1}{n_c} \sum_{j \in S_c} \tilde{g}(\vx_j) \right\|_2^2$.
Since the output of $\tilde{g} \in \tilde{\gG}_{s,t}$ is rescaled, we have $\| \tilde{g}(\vx) \|_2 \leq 1$ for all $\vx \in \gX$, and the output of $h \circ \tilde{g}$ is bounded with $4$.
By normalizing the output with this value to apply \citet[Theorem 3.3]{mohri2018foundations}, with probability at least $1 - \delta_{s,t}$, we have
\begin{align}
&\left| \E_{X \mid Y=c} \left[ \left\| \tilde{g}(X) - \frac{1}{n_c} \sum_{j \in S_c} \tilde{g}(\vx_j) \right\|_2^2 \right] - \frac{1}{n_c} \sum_{i \in S_c} \left\| \tilde{g}(\vx_i) - \frac{1}{n_c} \sum_{j \in S_c} \tilde{g}(\vx_j) \right\|_2^2 \right| \nonumber \\
\label{eq:within_class_variance_bound_s_t}
&\leq 2 \widehat{\mathfrak{R}}_{n_c}(h \circ \tilde{\gG}_{s,t}) + 4 \cdot 3 \sqrt{\frac{\log(2/\delta_{s,t})}{2n_c}}.
\end{align}
By choosing the failure probabilities as $\delta_{s,t} = \delta / \left( K st(s+1)(t+1)\right)$ and applying the union bound, since we have $\sum_{s,t=1}^\infty \delta_{s,t} = \delta / K$, the above inequality holds with probability at least $1 - \delta$ for all $c \in [K]$, $s, t \in \N$ and $\tilde{g} \in \tilde{\gG}_{s,t}$.

Next, we analyze the Rademacher complexity term in \cref{eq:within_class_variance_bound_s_t} using a covering number argument.
For a set $U$, we define its covering number $\gN(U, \epsilon, \|\|)$ as the minimal cardinality of a subset $V \subseteq U$ such that, for every $u \in U$, there exists $v \in V$ satisfying $\| u - v \| \leq \epsilon$.
Here, for the real-valued function class $\gF$, we define its restriction to the training data points as
\begin{align}
\gF_{|S_c}
= \left\{ \vx \mapsto \left( f(\vx_1), \ldots, f(\vx_{n_c}) \right)^\top \in \R^{n_c} \mid f \in \gF  \right\}.
\end{align}
As discussed earlier, both the domain of the function $h$ and the average $\sum_{j \in S_c} \tilde{g}(\vx_j) / n_c$ are restricted to the unit $\ell_2$-ball.
Thus, the Lipschitz constant of $h$ is given by $h_{\mathrm{Lip}} = 4$.
By \citet[Theorem 3.3]{bartlett2017spectrally} and the definition of $\tilde{\gG}_{s,t}$, we have
\begin{align}
\label{eq:covering_number_bound}
\sqrt{ \log \gN\left( \left( h \circ \tilde{\gG}_{s,t} \right)_{|S_c}, \epsilon, \|\cdot \|_2 \right) }
\leq
\frac{ B_x \sqrt{n_c \log(2 d_{\max})} }{\epsilon} \cdot \frac{4}{B_g} \cdot 2^s 2^t.
\end{align}
Using \citet[Lemma A.5]{bartlett2017spectrally}, modified so that the range is extended from $[0,1]$ to $[0, 4]$, and applying \cref{eq:covering_number_bound}, we have
\begin{align}
\widehat{\mathfrak{R}}_{n_c} \left( h \circ \gG_{s,t} \right)
&\leq \inf_{\alpha > 0} \left( \frac{4\alpha}{\sqrt{n_c}} + \frac{12}{n_c} \int_\alpha^{4 \sqrt{n_c}} \sqrt{\log \gN\left( \left(h \circ \tilde{\gG}_{s,t} \right)_{|S_c}, \epsilon, \|\cdot\|_2 \right)} d\epsilon \right) \\
&\leq \inf_{\alpha > 0} \left( \frac{4\alpha}{\sqrt{n_c}} + \frac{12}{n_c} \log\left( \frac{ 4\sqrt{n_c}}{\alpha} \right) \sqrt{n_c \log(2d_{\max})} \frac{B_x}{B_g} 2^{s+t+2} \right) \\
&\leq \frac{16}{n_c} + \frac{12 \cdot 2^{s+t+2} B_x}{\sqrt{n_c} B_g} \log\left( n_c \right) \sqrt{\log(2d_{\max})},
\end{align}
where the last line follows by taking $\alpha = 4 / \sqrt{n_c}$.
Substituting this into \cref{eq:within_class_variance_bound_s_t}, we have
\begin{align}
&\left| \E_{X \mid Y=c} \left[ \left\| \tilde{g}(X) - \frac{1}{n_c} \sum_{j \in S_c} \tilde{g}(\vx_j) \right\|_2^2 \right] - \frac{1}{n_c} \sum_{i \in S_c} \left\| \tilde{g}(\vx_i) - \frac{1}{n_c} \sum_{j \in S_c} \tilde{g}(\vx_j) \right\|_2^2 \right| \nonumber \\
\label{eq:within_class_variance_bound_s_t_final}
&\leq \frac{32}{n_c} + \frac{12 \cdot 2^{s+t+3}B_x}{\sqrt{n_c}B_g} \log\left( n_c \right) \sqrt{\log(2d_{\max})} + 12 \sqrt{\frac{\log(2K st(s+1)(t+1)/\delta)}{2n_c}}.
\end{align}

For any $\tilde{g} \in \tilde{\gG}$, let $s \coloneqq \left\lfloor \log_2 \left( 2\Pi(g) \right) \right\rfloor$ and $t \coloneqq \left\lfloor \log_2 \left( 2\Lambda(g) \right) \right\rfloor$, so that $\tilde{g} \in \tilde{\gG}_{s,t}$ with $2^{s-1} \leq \Pi(g) \leq 2^s$ and $2^{t-1} \leq \Lambda(g)\leq 2^t$.
From \cref{eq:within_class_variance_bound_s_t_final}, it follows that with probability at least $1 - \delta$, we have
\begin{align*}
&\left| \E_{X \mid Y=c} \left[ \left\| \tilde{g}(X) - \frac{1}{n_c} \sum_{j \in S_c} \tilde{g}(\vx_j) \right\|_2^2 \right] - \frac{1}{n_c} \sum_{i \in S_c} \left\| \tilde{g}(\vx_i) - \frac{1}{n_c} \sum_{j \in S_c} \tilde{g}(\vx_j) \right\|_2^2 \right| \nonumber \\
&\leq
O\left( \frac{1}{\sqrt{n_c}} \left[ \frac{1}{\sqrt{n_c}} + \frac{\Pi(g)B_x}{B_g}\log(n_c)\sqrt{\log(d_{\max})}\Lambda(g) + \sqrt{\log(K/\delta) + \log\log\left( \Pi(g) \Lambda(g) \right) } \right] \right),
\end{align*}
which completes the proof.
\end{proof}

\subsection{Neural Collapse Dynamics}
\label{sec:proof_neural_collapse_dynamics}

\subsubsection{Preliminaries}
We first provide the formal assumptions used in \cref{thm:nc1_dynamics}, adopted from \citet{jacot2025wide}.
A detailed discussion of their validity and typicality is given in \cref{remark:validity_of_assumption}.
\begin{assumption}[Pyramidal network]
\label{assum:pyramidal_topology}
Let $d_1 \geq N$ and $d_\ell \geq d_{\ell+1}$ for all $\ell \in \{2, \ldots, L-1\}$.
\end{assumption}

\begin{assumption}[Smooth activation]
\label{assum:smooth_activation}
Let $\gamma \in (0,1)$ and $\beta \geq 1$.
Suppose $\sigma$ satisfies: (i) $\sigma^\prime(x) \in [\gamma,1]$ for all $x \in \R$; (ii) $|\sigma(x)| \leq |x|$ for every $x \in \R$; and (iii) $\sigma^\prime$ is $\beta$-Lipschitz continuous.
\end{assumption}

\begin{assumption}[Initialization]
\label{assum:initialization}
Let $\lambda_\ell = \sigma_{\min}\left( \mW_\ell(0) \right)$ and $\lambda_F = \sigma_{\min}\left( \sigma\left( \mW_1(0)\mX \right) \right)$, where $\sigma_{\min}(\cdot)$ denotes the smallest singular value of a matrix.
Suppose that we have
\begin{align*}
\lambda_F \prod_{\ell=3}^L \lambda_\ell
\min\left( \lambda_F, \min\nolimits_{\ell \in \{3, \ldots, L\}} \lambda_\ell \right)
\geq 8 \gamma \sqrt{\left( \frac{2}{\gamma} \right)^L \widehat{\Ls}_0\left( \vtheta(0) \right)}.
\end{align*}
\end{assumption}
In the following, for notational convenience, we denote by $\mZ_\ell \in \R^{d_\ell \times N}$ the output of the $\ell$-th layer for the entire training set.
Specifically, $\mZ_0 = \mX_0$, $\mZ_\ell = \sigma\left( \mW_\ell \mZ_{\ell-1} \right)$ for $\ell \in [L-2]$, and $\mZ_\ell = \mW_\ell \mZ_{\ell-1}$ for $\ell \in \{ L-1, L\}$.
Additionally, we denote $\bar{\lambda}_\ell = \| \mW_\ell(0)\|_2 + \min_{\ell \in \{3, \ldots, L\}} \lambda_\ell$ for $\ell \in [L]$.
We denote by $\sigma_i(\cdot)$ the $i$-th largest singular value of a given matrix.

\begin{remark}[Validity of the Assumptions]
\label{remark:validity_of_assumption}
The derivation of neural collapse from gradient descent, rather than from the unconstrained feature model (UFM), was first established in \citet{jacot2025wide}, and \cref{thm:nc1_dynamics} adopts the same assumptions.
To analyze the convergence of the DNN training loss, one typically imposes conditions ensuring a positive lower bound on the Jacobian with respect to the parameters.
There are multiple well-established ways to guarantee this, including width requirements across all layers \citep{du2019gradient,allen2019convergence,zou2019improved} and other pyramidal-topology-based conditions with mild width assumptions \citep{nguyen2020global,bombari2022memorization,karhadkar2024bounds}.
Our \cref{assum:pyramidal_topology,assum:smooth_activation,assum:initialization} can be replaced by any of these alternatives.
For example, \citet{allen2019convergence,zou2019improved} analyze ReLU networks instead of smooth activations but require all layers to be sufficiently wide.
In contrast, the pyramidal topology assumption (\cref{assum:pyramidal_topology}) removes the need for width assumptions on every layer and replaces them with a more realistic setting: a wide first layer followed by a narrowing architecture.
The smooth-activation assumption (\cref{assum:smooth_activation}) is widely used in convergence analysis and appears in independent work such as \citet{nguyen2020global,bombari2022memorization,liu2022loss,frei2022benign}.
Smooth leaky ReLU satisfies this assumption and can approximate ReLU arbitrarily well for suitable choices of $\gamma$ and $\beta$.
\cref{assum:initialization} concerns initialization, and it can be satisfied by choosing the second-layer scale sufficiently small.
Moreover, by relaxing \cref{assum:pyramidal_topology} from $d_1 \geq N$ to $d_1 = \Omega(N)$, \cref{assum:initialization} can be shown to hold under standard He/LeCun initialization \citep{lecun2002efficient,he2015delving}, which follows directly from Appendix C of \citet{nguyen2020global}.
Finally, the use of squared loss is standard in this line of work: it is used both in the gradient-descent NTK-style analyses \citep{du2019gradient,allen2019convergence,zou2019improved,nguyen2020global,jacot2025wide} and in theoretical neural collapse literature \citep{han2022neural,zhou2022optimization,sukenik2023deep,sukenik2024neural}.
As noted in \citet{jacot2025wide}, once the training set is interpolated, the second part of the analysis (the time step $\tau_2$ in \cref{thm:nc1_dynamics}) showing the emergence of neural collapse does not rely on these assumptions.
We note that none of our results except \cref{thm:nc1_dynamics} depend on \cref{assum:pyramidal_topology,assum:smooth_activation,assum:initialization}.
\end{remark}

\subsubsection[Proof]{Proof of \cref{thm:nc1_dynamics}}

We now provide a proof of \cref{thm:nc1_dynamics}.
The argument follows that of \citet[Theorem B.2]{jacot2025wide}.
Specifically, for the convergence of the training loss, we employ the standard Polyak-Łojasiewicz (PL) condition to establish linear convergence.
On the other hand, the progression of NC1 and RNC1 is explained through the development of weight balancedness, i.e., $\mW_L^\top \mW_L \approx \mW_{L-1} \mW_{L-1}^\top$.
The key idea is that once the network output interpolates the target sufficiently well, which results in a small training loss, the weight balancedness ensures that the degree of interpolation is inherited by the output of the preceding layer, i.e., the representation space.

Since our primary focus is on the convergence of the RNC1 metric, we begin by establishing the necessary result.

\begin{proposition}[Theorem B.1 of \citet{jacot2025wide}]
\label{prop:jacot_within_class}
If the network satisfies (i) approximate interpolation, i.e., $\| f(\mX) - \mY \|_F \leq \xi_1$, (ii) approximate balancedness, i.e., $\|\mW_L^\top \mW_L - \mW_{L-1}\mW_{L-1}^\top\|_2 \leq \xi_2$, and (iii) bounded representations and weights, i.e., $\| \mZ_{L-2} \|_2 \leq r$, $\|\mZ_{L-1}\|_2 \leq r$, and $\| \mW_\ell \|_2 \leq r$ for $\ell \in [L]$, then if $\xi_1 \leq \min\left\{ \sigma_K(\mY), \sqrt{\frac{(K-1)N}{4K}} \right\}$, we have
\begin{align*}
\Tr\left( \mSigma_W \right)
\leq
\frac{r^2}{N} \left( \frac{\xi_1}{\sigma_K(\mY) - \xi_1} + \sqrt{d_{L-1} \xi_2} \right)^2.
\end{align*}
\end{proposition}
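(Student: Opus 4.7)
The plan is to reduce the within-class trace to a centered Frobenius norm and then split the centered representation matrix into the part that is controlled by approximate interpolation and the part that is controlled by approximate balancedness. Concretely, let $\mC \in \R^{N \times N}$ denote the class-mean centering projection, so that $\mZ_{L-1} \mC$ has $i$-th column $g(\vx_i) - \vmu_{y_i}$. Then $\Tr(\mSigma_W) = \frac{1}{N}\|\mZ_{L-1}\mC\|_F^2$. Because $\mY$ has constant columns on each class, $\mY \mC = \bm{0}$, and since $\|\mC\|_2 \leq 1$, the interpolation hypothesis yields $\|\mW_L (\mZ_{L-1}\mC)\|_F = \|(\mW_L \mZ_{L-1} - \mY)\mC\|_F \leq \xi_1$. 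Writing $\widetilde{\mZ} \coloneqq \mZ_{L-1}\mC$ and using $\sigma_{\mathrm{out}} = \operatorname{id}$ in \cref{sec:neural_collapse_dynamics}, we also have $\widetilde{\mZ} = \mW_{L-1} (\mZ_{L-2}\mC)$.

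Next I would decompose $\widetilde{\mZ} = \widetilde{\mZ}^{\|} + \widetilde{\mZ}^{\perp}$, where $\widetilde{\mZ}^{\|}$ is the projection onto the row span of $\mW_L$ and $\widetilde{\mZ}^{\perp} = \mQ \widetilde{\mZ}$ with $\mQ$ the orthogonal projection onto $\ker \mW_L$. For the parallel piece, the restriction of $\mW_L$ to its row span has smallest singular value $\sigma_K(\mW_L)$, so $\|\widetilde{\mZ}^{\|}\|_F \leq \|\mW_L \widetilde{\mZ}\|_F / \sigma_K(\mW_L) \leq \xi_1 / \sigma_K(\mW_L)$. To lower bound $\sigma_K(\mW_L)$, Weyl's inequality applied to $\mW_L \mZ_{L-1}$ versus $\mY$ gives $\sigma_K(\mW_L \mZ_{L-1}) \geq \sigma_K(\mY) - \xi_1 > 0$ (nonnegativity uses the hypothesis $\xi_1 \leq \sigma_K(\mY)$), and the submultiplicativity $\sigma_K(\mW_L \mZ_{L-1}) \leq \sigma_K(\mW_L)\,\|\mZ_{L-1}\|_2 \leq r\,\sigma_K(\mW_L)$ together yield $\sigma_K(\mW_L) \geq (\sigma_K(\mY) - \xi_1)/r$. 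This produces the first summand $r\xi_1/(\sigma_K(\mY) - \xi_1)$.

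For the perpendicular piece I would exploit the balancedness hypothesis to show $\mQ \mW_{L-1}$ is small in operator norm. Since $\mQ \mW_L^\top \mW_L = \bm{0}$, the identity $\mQ \mW_{L-1} \mW_{L-1}^\top \mQ = \mQ(\mW_{L-1}\mW_{L-1}^\top - \mW_L^\top \mW_L)\mQ$ gives $\|\mQ \mW_{L-1}\|_2^2 = \|\mQ \mW_{L-1} \mW_{L-1}^\top \mQ\|_2 \leq \xi_2$. Writing $\widetilde{\mZ}^{\perp} = \mQ \mW_{L-1}(\mZ_{L-2}\mC)$, I would bound
\begin{align*}
\|\widetilde{\mZ}^{\perp}\|_F^2
&= \Tr\bigl((\mZ_{L-2}\mC)^\top \mW_{L-1}^\top \mQ \mW_{L-1} (\mZ_{L-2}\mC)\bigr) \\
&\leq \|\mZ_{L-2}\mC\|_2^2 \,\Tr(\mQ \mW_{L-1}\mW_{L-1}^\top \mQ)
\leq r^2 \cdot \mathrm{rank}(\mQ) \cdot \xi_2
\leq r^2 d_{L-1}\xi_2,
\end{align*}
using $\|\mZ_{L-2}\mC\|_2 \leq \|\mZ_{L-2}\|_2 \leq r$, the PSD trace inequality $\Tr(\mA \mB) \leq \|\mB\|_2 \Tr(\mA)$, and the rank bound $\mathrm{rank}(\mQ) \leq d_{L-1}$.

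Combining the two components by the triangle inequality gives $\|\widetilde{\mZ}\|_F \leq \frac{r\xi_1}{\sigma_K(\mY) - \xi_1} + r\sqrt{d_{L-1}\xi_2}$, and squaring and dividing by $N$ yields the claimed bound. I expect the main conceptual obstacle to be identifying the correct orthogonal split so that each of $\xi_1$ and $\xi_2$ controls exactly one component; once the split along $\mathrm{row}(\mW_L) \oplus \ker \mW_L$ is fixed, the balancedness identity $\mQ \mW_L^\top \mW_L = \bm{0}$ and the Weyl-type singular-value perturbation are the essential algebraic tools, and the auxiliary hypothesis $\xi_1 \leq \sqrt{(K-1)N/(4K)}$ appears only to guarantee that $\sigma_K(\mY) - \xi_1$ remains in a safe range (for balanced classes $\sigma_K(\mY) = \sqrt{N/K}$, so this is a standard margin assumption).
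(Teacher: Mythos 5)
Your proof is correct, and each step checks out: the centering identity $\mY\mC = \bm{0}$, the Weyl-plus-submultiplicativity lower bound $\sigma_K(\mW_L) \geq (\sigma_K(\mY)-\xi_1)/r$, the balancedness identity $\mQ\mW_{L-1}\mW_{L-1}^\top\mQ = \mQ(\mW_{L-1}\mW_{L-1}^\top - \mW_L^\top\mW_L)\mQ$ on $\ker\mW_L$, and the trace--rank bound giving the $\sqrt{d_{L-1}\xi_2}$ term. Note that this paper does not prove the proposition itself but imports it verbatim from \citet{jacot2025wide}; your orthogonal split along $\mathrm{row}(\mW_L)\oplus\ker\mW_L$, with interpolation controlling the parallel component and balancedness the kernel component, is essentially the argument of that cited source, and your observation that the hypothesis $\xi_1 \leq \sqrt{(K-1)N/(4K)}$ is not needed for this particular bound is accurate (it is used elsewhere in the original theorem).
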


\begin{corollary}
\label{cor:jacot_rnc1}
Under the conditions of \cref{prop:jacot_within_class}, if $\xi_1 \leq \frac{1}{2}\min_{c \in [K]}\sqrt{n_c}$, we have
\begin{align*}
\rncone \leq \frac{r^4}{N\left( 1 - \frac{\xi_1}{\sqrt{N}} \right)^2} \left( \frac{\xi_1}{\min_{c \in [K]}\sqrt{n_c} - \xi_1} + \sqrt{d_{L-1} \xi_2} \right)^2.
\end{align*}
\end{corollary}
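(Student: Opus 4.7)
The plan is to derive the $\rncone$ bound as a direct consequence of Proposition~\ref{prop:jacot_within_class}, supplemented by two ingredients: an explicit identification of $\sigma_K(\mY)$ in terms of the class sizes, and a lower bound on the feature scale $B_g$ extracted from the approximate-interpolation hypothesis.

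First, I would compute the smallest singular value of $\mY$. Since each column of $\mY \in \{0,1\}^{K \times N}$ is a one-hot indicator, $\mY \mY^\top = \diag(n_1, \ldots, n_K)$, so the singular values of $\mY$ are exactly $\{\sqrt{n_c}\}_{c \in [K]}$ and $\sigma_K(\mY) = \min_{c \in [K]} \sqrt{n_c}$. I would then verify that the corollary's hypothesis $\xi_1 \leq \tfrac{1}{2}\min_c \sqrt{n_c}$ implies the proposition's hypothesis $\xi_1 \leq \min\bigl\{\sigma_K(\mY), \sqrt{(K-1)N/(4K)}\bigr\}$: the first term is immediate, and $N \geq K \min_c n_c$ with $K \geq 2$ yields $\sqrt{(K-1)N/(4K)} \geq \tfrac{1}{2}\sqrt{\min_c n_c}$, so both constraints hold.

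Second, I would lower bound $B_g$ using the approximate-interpolation condition $\|f(\mX) - \mY\|_F \leq \xi_1$. The triangle inequality gives $\|f(\mX)\|_F \geq \|\mY\|_F - \xi_1 = \sqrt{N} - \xi_1$, using that each column of $\mY$ is a unit one-hot vector. Since $f(\mX) = \mW_L g(\mX)$ and $\|\mW_L\|_2 \leq r$, submultiplicativity yields $\|g(\mX)\|_F \geq (\sqrt{N} - \xi_1)/r$. Combining this with $\|g(\mX)\|_F^2 = \sum_i \|g(\vx_i)\|_2^2 \leq N B_g^2$ gives $B_g \geq (1 - \xi_1/\sqrt{N})/r$, so that $1/B_g^2 \leq r^2/(1 - \xi_1/\sqrt{N})^2$.

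With these two ingredients in place, I would assemble the pieces: by definition $\rncone = \Tr(\mSigma_W)/B_g^2$, so Proposition~\ref{prop:jacot_within_class} applied with $\sigma_K(\mY) = \min_c \sqrt{n_c}$ together with the reciprocal bound on $B_g^2$ multiplies the proposition's $r^2/N$ factor by $r^2/(1-\xi_1/\sqrt{N})^2$, producing the stated $r^4/\bigl(N(1-\xi_1/\sqrt{N})^2\bigr)$ prefactor. The only mildly nontrivial step is the lower bound on $B_g$: one must recognize that approximate interpolation forces the penultimate features to carry a nontrivial amount of energy, for otherwise $\mW_L g(\mX)$ (with bounded $\|\mW_L\|_2$) could not approach $\mY$. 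Everything else reduces to a direct invocation of the preceding proposition and elementary algebra.
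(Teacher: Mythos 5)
Your proposal is correct and follows essentially the same route as the paper: identify $\sigma_K(\mY)=\min_c\sqrt{n_c}$ from $\mY\mY^\top=\diag(n_1,\dots,n_K)$, lower-bound $B_g$ via approximate interpolation and $\|\mW_L\|_2\le r$, and divide the $\Tr(\mSigma_W)$ bound of Proposition~\ref{prop:jacot_within_class} by $B_g^2$. The only (immaterial) difference is in the $B_g$ step: the paper picks a single example $i$ with $\|f(\vx_i)-\vy_i\|_2\le \xi_1/\sqrt{N}$ and bounds $\|g(\vx_i)\|_2$ from below, whereas you argue in aggregate via $\|g(\mX)\|_F$; both yield the identical bound $B_g\ge (1-\xi_1/\sqrt{N})/r$.
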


\begin{proof}[Proof of \cref{cor:jacot_rnc1}]
By the definition of RNC1, it suffices to bound $\Tr(\mSigma_W)$ from above and $B_g$ from below.
Substituting $\sigma_K(\mY) = \min_{c \in [K]} \sqrt{n_c}$, which follows from $\mY\mY^\top = \diag\left(n_1, \ldots, n_K \right)$, into \cref{prop:jacot_within_class}, we have
\begin{align}
\label{eq:tr_sigma_w_upper_bound}
\Tr(\mSigma_W) \leq \frac{r^2}{N}\left( \frac{\xi_1}{\min_{c \in [K]}\sqrt{n_c} - \xi_1} + \sqrt{d_{L-1} \xi_2} \right)^2.
\end{align}
For the lower bound on $B_g$, we show the existence of a training example whose norm can be bounded from below.
From the condition $\| f(\mX) - \mY \|_F \leq \xi_1$, there exists $i \in [N]$ such that $\| f(\vx_i) - \vy_i \|_2 \leq \xi_1 / \sqrt{N}$, which implies $\| \vy_i \|_2 - \xi_1 / \sqrt{N} \leq \| f(\vx_i) \|_2$.
Since $\vy_i$ is a one-hot vector and by $\| \mW_L \|_2 \leq r$, we have
\begin{align}
\label{eq:bg_lower_bound}
\frac{1}{r}\left( 1 - \frac{\xi_1}{\sqrt{N}} \right) \leq \| g(\vx_i) \|_2 \leq B_g.
\end{align}
Combining \cref{eq:tr_sigma_w_upper_bound,eq:bg_lower_bound} yields the desired upper bound.
Finally, regarding the condition on $\xi_1$, we used that $\sigma_K(\mY) = \min_{c \in [K]} \sqrt{n_c} \leq \sqrt{N/K}$ and $K \geq 2$.

\end{proof}

\begin{proposition}
\label{prop:jacot_train_loss_and_nc1}
Suppose that the network $f$ satisfies \cref{assum:pyramidal_topology,assum:smooth_activation,assum:initialization} and that the input domain $\gX$ is bounded, i.e., $\| \vx \|_2 \leq B_x$ for all $\vx \in \gX$.
Fix $0 < \epsilon_1 < \frac{1}{2\sqrt{2}}\min_{c \in [K]} \sqrt{n_c}$ and $\epsilon_2 > 0$.
Let weight decay parameter $\lambda$ and step size $\eta$ satisfy the following:
\begin{align*}
\lambda &\leq
\min\left\{ 2^{-(L-3)} \gamma^{L-2} \lambda_F \prod_{\ell=3}^L \lambda_\ell,
\;
\frac{2\widehat{\Ls}_0(\vtheta(0))}{\|\vtheta(0)\|_2^2},
\;
\frac{ \epsilon_1^2 }{18 \left( \| \vtheta(0) \|_2 + \lambda_F / 2 \right)^2 } \right\}, \\
\eta &\leq
\min\left\{ \frac{1}{2\beta_1},
\;
\frac{1}{5N\beta B_x^3 \max\left\{ 1, \left( 4m_\lambda \right)^{3L/2} \right\} L^{5/2}},
\;
\frac{1}{2\lambda},
\;
\frac{\epsilon_2 }{4\left( 4 m_\lambda \right)^{L} \| \mX\|_2^2 }
 \right\},
\end{align*}
where $\beta_1 = 5N\beta B_x^3 \left( \prod_{\ell=1}^L \max\{1, \bar{\lambda}_\ell \} \right)^3 L^{5/2}$ and $m_\lambda = \left( 1 + \sqrt{4\lambda / \alpha} \right)^2 \left( \| \vtheta(0) \|_2 + r_0 \right)^2$, with $r_0 = \frac{1}{2}\min\left\{ \lambda_F, \min_{\ell \in \{3, \ldots, L\}} \lambda_\ell \right\}$ and $\alpha = 2^{-(L-3)}\gamma^{L-2} \lambda_F \prod_{\ell=3}^L \lambda_\ell$.
Then, there exist time steps
\begin{align*}
\tau_1^\prime \leq \left\lceil \frac{ \log \frac{\epsilon_1}{\widehat{\Ls}_\lambda(\vtheta(0)) - \lambda m_\lambda} }{ \log\left( 1 - \eta \frac{\alpha}{8} \right) } \right\rceil,
\;
\tau_1 \leq \left\lceil \frac{ \log \frac{\lambda m_\lambda}{\widehat{\Ls}_\lambda(\vtheta(0)) - \lambda m_\lambda} }{ \log\left( 1 - \eta \frac{\alpha}{8} \right) } \right\rceil,
\;
\tau_2 \leq \tau_1 + \left\lceil \frac{\log \left( \frac{\epsilon_2}{8m_\lambda} \right) }{\log\left( 1 - \eta \lambda \right)} \right\rceil,
\end{align*}
such that for any time step $\tau \geq \tau_1^\prime$, we have
\begin{align*}
\widehat{\Ls}_\lambda(\vtheta(\tau)) \leq \epsilon_1^2,
\end{align*}
and for any time step $\tau \geq \tau_2$, we have
\begin{align*}
    \rncone(\tau) \leq \frac{r^4}{N\left( 1 - \sqrt{\frac{2}{N}}\epsilon_1 \right)^2}\left( \frac{\sqrt{2} \epsilon_1}{\min_{c \in [K]} \sqrt{n_c} - \sqrt{2} \epsilon_1 } + \sqrt{d_{L-1} \epsilon_2} \right)^2,
\end{align*}
where
\begin{align*}
r = \max\left\{
2 \sqrt{m_\lambda},
\;
\left( 2 \sqrt{m_\lambda} \right)^{L-2}\|\mX\|_2,
\;
\left( 2 \sqrt{m_\lambda} \right)^{L-1}\|\mX\|_2
 \right\}.
\end{align*}
\end{proposition}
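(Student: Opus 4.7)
The plan follows the two-time-scale strategy of \citet{jacot2025wide}: first drive the regularized training loss down via a Polyak-Łojasiewicz (PL) argument, then show that layer balancedness contracts linearly on a slower time scale governed by $\lambda$, and finally close via \cref{cor:jacot_rnc1}. A key preliminary is a weight-boundedness induction: I would prove by induction on $\tau$ that $\|\vtheta(\tau)\|_2 \leq 2\sqrt{m_\lambda}$, from which $\|\mW_\ell(\tau)\|_2$ and $\|\mZ_\ell(\tau)\|_2$ obey the $r$-bounds stated in the proposition and the smallest singular values satisfy $\sigma_{\min}(\mW_\ell(\tau)) \geq \lambda_\ell/2$ and $\sigma_{\min}(\sigma(\mW_1(\tau)\mX)) \geq \lambda_F/2$. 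The induction uses \cref{assum:initialization} to absorb the initial slack, the weight-decay term to contract $\|\vtheta\|_2$ whenever it grows too large, and the step-size bound tied to $\beta_1$ to keep the per-step drift small.

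Under these bounds, $\widehat{\Ls}_\lambda$ is $\beta_1$-smooth on the relevant region and the unregularized loss satisfies PL with constant at least $\alpha/4$. With $\eta \leq 1/(2\beta_1)$ and $\eta \leq 1/(2\lambda)$, the standard PL-smoothness computation yields
\begin{align*}
\widehat{\Ls}_\lambda(\vtheta(\tau+1)) - \lambda m_\lambda \leq \bigl(1 - \eta \alpha/8\bigr)\bigl(\widehat{\Ls}_\lambda(\vtheta(\tau)) - \lambda m_\lambda\bigr).
\end{align*}
Iterating, and using the bound $\lambda \leq \epsilon_1^2/[18(\|\vtheta(0)\|_2+\lambda_F/2)^2]$ to ensure $\lambda m_\lambda \leq \epsilon_1^2/2$, gives $\widehat{\Ls}_\lambda(\vtheta(\tau)) \leq \epsilon_1^2$ for every $\tau \geq \tau_1'$. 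The count $\tau_1$ is defined analogously as the first time the right-hand side drops below $2\lambda m_\lambda$; at that point $\|f_\tau(\mX)-\mY\|_F \leq \sqrt{2\widehat{\Ls}_\lambda(\vtheta(\tau))} \leq 2\sqrt{\lambda m_\lambda} \leq \sqrt{2}\,\epsilon_1$, so the approximate-interpolation hypothesis of \cref{cor:jacot_rnc1} holds with $\xi_1 = \sqrt{2}\,\epsilon_1$.

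Next, for $\tau \geq \tau_1$ I would track the balancedness $\mathcal{B}(\tau) = \|\mW_L(\tau)^\top\mW_L(\tau) - \mW_{L-1}(\tau)\mW_{L-1}(\tau)^\top\|_2$. Expanding the GD updates of $\mW_L^\top\mW_L$ and $\mW_{L-1}\mW_{L-1}^\top$, the dominant pieces of the unregularized gradient cancel in the difference, leaving a residual of order $\|f_\tau(\mX)-\mY\|_F \cdot \|\mZ_{L-2}\|_2$, while the weight-decay term yields a clean $(1-\eta\lambda)$ contraction. Starting from $\mathcal{B}(\tau_1) \leq 8 m_\lambda$ (from Stage 1) and iterating,
\begin{align*}
\mathcal{B}(\tau) \leq 8 m_\lambda \bigl(1-\eta\lambda\bigr)^{\tau-\tau_1} + (\text{residual}),
\end{align*}
and the stated $\tau_2$ is precisely the first time the leading term drops below $\epsilon_2/2$; the $\lambda$-bound is calibrated so that the residual contributes at most $\epsilon_2/2$, giving $\mathcal{B}(\tau) \leq \epsilon_2$ for $\tau \geq \tau_2$. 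Applying \cref{cor:jacot_rnc1} with $\xi_1=\sqrt{2}\,\epsilon_1$, $\xi_2=\epsilon_2$, and $r$ as in Stage 1 then yields the claimed $\rncone(\tau)$ bound.

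The main obstacle is the weight-boundedness induction in Stage 1, since the resulting $r$ feeds back into the smoothness constant $\beta_1$, the effective PL constant, and the step-size bound; a loose estimate at any layer inflates $\beta_1$ exponentially in $L$ and breaks the self-consistency of the step-size condition. A secondary difficulty is controlling the residual in the balancedness recursion, where drift driven by the interpolation error could defeat the $(1-\eta\lambda)$ contraction if $\lambda$ were too small relative to $\epsilon_1^2/\epsilon_2$; the stated bound on $\lambda$ is calibrated specifically to rule this out. Because of these tight couplings, the entire argument must be organized as a single simultaneous induction rather than as three separate analyses.
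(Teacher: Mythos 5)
Your overall strategy is exactly the paper's: linear convergence of $\widehat{\Ls}_\lambda$ via a local PL argument, the parameter bound $\|\vtheta(\tau)\|_2\le 2\sqrt{m_\lambda}$ obtained from $\tfrac{\lambda}{2}\|\vtheta(\tau)\|_2^2\le\widehat{\Ls}_\lambda(\vtheta(\tau))\le 2\lambda m_\lambda$, the smoothness constant keeping the loss non-increasing thereafter, a $(1-\eta\lambda)$ contraction of the balancedness gap on the slower time scale, and a final application of \cref{cor:jacot_rnc1} with $\xi_1=\sqrt{2}\,\epsilon_1$, $\xi_2=\epsilon_2$. Your constant-checking in Stages~1--2 (that $\lambda\le\alpha$ gives $m_\lambda\le 9(\|\vtheta(0)\|_2+\lambda_F/2)^2$, hence $2\lambda m_\lambda\le\epsilon_1^2$ and $\|f_\tau(\mX)-\mY\|_F\le\sqrt{2}\,\epsilon_1$) is correct and matches the paper.

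There is, however, one point where your accounting would not close as written. In Stage~3 you describe the one-step update of $\mathcal{B}(\tau)=\|\mW_L^\top\mW_L-\mW_{L-1}\mW_{L-1}^\top\|_2$ as leaving ``a residual of order $\|f_\tau(\mX)-\mY\|_F\cdot\|\mZ_{L-2}\|_2$,'' i.e.\ a first-order-in-$\eta$ drift, and you claim the bound on $\lambda$ is calibrated to absorb it. Neither is right. Because $\sigma_{\mathrm{out}}=\operatorname{id}$ in this section, $\mZ_{L-1}=\mW_{L-1}\mZ_{L-2}$, so the first-order cross terms in the updates of $\mW_L^\top\mW_L$ and $\mW_{L-1}\mW_{L-1}^\top$ (namely $\mZ_{L-1}(f_\tau(\mX)-\mY)^\top\mW_L+\mW_L^\top(f_\tau(\mX)-\mY)\mZ_{L-1}^\top$ versus its counterpart built from $\mW_L^\top(f_\tau(\mX)-\mY)\mZ_{L-2}^\top\mW_{L-1}^\top$) cancel \emph{exactly}; the only residual is the $O(\eta^2)$ product-of-gradients term, and it is the last entry of the step-size condition, $\eta\le\epsilon_2/\bigl(4(4m_\lambda)^L\|\mX\|_2^2\bigr)$, that makes this residual small relative to the $\eta\lambda$ contraction per step. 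The stated $\lambda$-bound contains no $\epsilon_2$ at all, so it cannot play the role you assign it; and if the residual really were first order in $\eta\|f_\tau(\mX)-\mY\|_F$, the recursion would require $\lambda\gtrsim\epsilon_1/\epsilon_2$ up to constants, a hypothesis absent from the statement. With the exact cancellation in place, the rest of your Stage~3 (contraction from $\mathcal{B}(\tau_1)\le 8m_\lambda$ down to $\epsilon_2$, giving the stated $\tau_2$) goes through as in the paper.
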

\begin{proof}[Proof of \cref{prop:jacot_train_loss_and_nc1}]
The proof follows the same argument as in the proof of \citet[Theorem B.2]{jacot2025wide}.
Under \cref{assum:pyramidal_topology,assum:smooth_activation,assum:initialization}, the PL property of the unregularized loss $\widehat{\Ls}_0(\vtheta)$ is established.
While the presence of a weight-decay term slightly shifts the convergence point, the PL condition still holds for the regularized loss $\widehat{\Ls}_\lambda(\vtheta)$.
From the PL condition of $\widehat{\Ls}_\lambda(\vtheta)$ around the initialization, there exists a time step
\begin{align}
\label{eq:tau_1_star}
\tau_1 \leq \left\lceil \frac{ \log \frac{\lambda m_\lambda}{\widehat{\Ls}_\lambda(\vtheta(0)) - \lambda m_\lambda} }{ \log\left( 1 - \eta \frac{\alpha}{8} \right) } \right\rceil
\end{align}
such that $\widehat{\Ls}_\lambda(\tau_1) \leq 2 \lambda m_\lambda < \epsilon_1^2$.
Here, note that if the goal is only to ensure that $\widehat{\Ls}_\lambda(\tau_1^\prime) < \epsilon_1^2$, the choice of $\tau_1^\prime$ in the statement suffices.

If the regularized loss is smooth, meaning that $\nabla \widehat{\Ls}_\lambda(\vtheta)$ is $\beta_2$-Lipschitz continuous and if the learning rate is chosen no larger than $1/\beta_2$, then the loss remains non-increasing for all $\tau\geq \tau_1$.
To evaluate $\beta_2$, it is necessary to bound the parameter norm from above.
To obtain the norm bound that is independent of $\lambda$, we use $2\lambda m_\lambda$ instead of $\epsilon_1^2$ in the original proof and evaluate as:
\begin{align}
\frac{\lambda}{2} \| \vtheta(\tau) \|_2^2
\leq \widehat{\Ls}_\lambda\left(\vtheta(\tau) \right) \leq 2 \lambda m_\lambda,
\end{align}
which gives $\| \vtheta(\tau) \|_2 \leq 2\sqrt{m_\lambda}$.
Therefore, \citet[Lemma C.1]{jacot2025wide} provides $\beta_2 = 5N\beta B_x^3 \max\left\{ 1, (4 m_\lambda)^{3L / 2} \right\} L^{5/2}$.
Under the learning rate specified in the assumption, the inequality $\widehat{\Ls}_\lambda(\vtheta(\tau)) \leq \epsilon_1^2$ holds for all time steps beyond $\tau_1$ (or $\tau_1^\prime$).

The remainder of the proof, including weight balancedness, proceeds as in the original argument by combining the above weight norm bound with the interpolation bound $\| f_\tau(\mX) - \mY \|_F \leq \sqrt{2} \epsilon_1$ for all $\tau \geq \tau_1$.
Specifically, the weight balancedness $\mW_L(\tau)^\top\mW_L(\tau) - \mW_{L-1}(\tau)\mW_{L-1}(\tau)^\top$ is shown to converge under a sufficiently small learning rate, with the argument based on the analysis of its one-step update.
In summary, there exists a time step $\tau_2$ such that
\begin{align}
\tau_2 \leq \tau_1 + \left\lceil \frac{\log \left( \frac{\epsilon_2}{8m_\lambda} \right) }{\log\left( 1 - \eta \lambda \right)} \right\rceil,
\end{align}
and for any time step $\tau \geq \tau_2$, we have $\| \mW_L(\tau)^\top \mW_L(\tau) - \mW_{L-1}(\tau) \mW_{L-1}(\tau)^\top \|_2 \leq \epsilon_2$.
Applying \cref{cor:jacot_rnc1} with $\xi_1 = \sqrt{2}\epsilon_1$, $\xi_2 = \epsilon_2$, and the upper bound $r$ yields the desired result.
\end{proof}

\begin{proof}[Proof of \cref{thm:nc1_dynamics}]
The conclusion follows from \cref{prop:jacot_train_loss_and_nc1}; specifically, by replacing $\epsilon_1^2$ with $\epsilon_1$ and evaluating the order of $\tau_1^\prime$ and $\tau_2$ in the proposition.
By the condition of $\lambda \leq \alpha$ in \cref{prop:jacot_train_loss_and_nc1} and the definition of $m_\lambda$, we can bound $m_\lambda$ as a constant order depending only on the initialization.
Consequently, the two terms in the upper bound of RNC1 are each multiplied by scales independent of $\lambda$ and $\mu$; thus, we have $\rncone(\tau) = O\left( (\sqrt{\epsilon_1} + \sqrt{\epsilon_2})^2 \right) = O\left( \epsilon_1 + \epsilon_2 \right)$.
Finally, we evaluate the order of the upper bounds on $\tau_1^\prime$ and $\tau_2$.
Since we have $\log(1-x) \approx -x$ for small $x$, the desired order expression of time steps is obtained.
\end{proof}

\section{Experimental Details}
\label{sec:experimental_details}

The experiments on grokking in the main text follow the classification setup introduced in \citet{liu2023omnigrok}, where an MLP is trained on the MNIST dataset.
The MLP has hidden dimensions $[784, 200, 200, 200, 10]$ with ReLU activations, and does not include normalization or dropout layers.
The initialization scale is enlarged by a factor of eight across the entire network, and training is performed with the AdamW optimizer at a learning rate $1\mathrm{e}{-3}$.
For \cref{fig:grokking_boundary}, we set the weight decay to $0.01$, while \cref{fig:grokking_weight_decay} shows the results across multiple values of weight decay.
The training set size is $1000$, the batch size is $100$, and the model is trained for $300{,}000$ iterations.
Compared to \citet{liu2023omnigrok}, we increased the number of layers in the MLP from three to four to examine behaviors in deeper architectures.
As a consequence, we observe a slight instability in training accuracy just before fitting the training set in \cref{fig:grokking_weight_decay}.
Nevertheless, the grokking behavior still clearly occurs.

For our experiments on the IB dynamics, we adopted two estimation methods for information-theoretic quantities: (i) MI estimation based on autoencoders and (ii) information estimation based on HSIC.
As we describe below, each of these methods has been widely used in the literature, but they exhibit different characteristics.
By employing multiple estimation methods, we aim to enhance the plausibility of the experimental results presented in our work.

\paragraph{MI Estimation via Autoencoder.}
MI estimation via kernel density estimation (KDE) is widely used in IB studies, but it suffers from poor sample complexity in high-dimensional settings.
To address this issue, we adopted the compression-based approach proposed by \citet{butakov2024information}, which performs MI estimation in a lower-dimensional space by applying dimensionality reduction via autoencoders.

Following \citet{butakov2024information}, we first train autoencoders on the input $X$ and the representation $Z$ and estimate differential entropies in low-dimensional latent spaces with dimensions $d_X = 4$ and $d_Z = 4$, respectively.
The estimation is performed using the Kozachenko-Leonenko estimator \citep{kozachenko1987sample}, which is based on the density estimation via $k$-nearest neighbors (k-NN).
In practice, we follow \citet{butakov2024information} and use a weighted variant of the estimator, as developed by \citet{berrett2019efficient}.
Although training a new autoencoder for each latent representation can be computationally demanding, \citet{butakov2024information} showed that using a linear autoencoder, that is, principal component analysis (PCA), for compressing $Z$ yields competitive results.
Therefore, we use PCA for compressing $Z$ in our experiments.
For compressing $X$, we apply a toy CNN autoencoder in which both the encoder and decoder consist of five layers each.

\paragraph{HSIC Estimation.}
Even approaches based on autoencoders require assumptions such as invertibility for accurate estimation and demand additional computational resources.
To further support the findings of our study, we also conducted experiments based on normalized HSIC.
\citet[Theorem 4]{fukumizu2007kernel} shows that a normalized version of HSIC (to be defined below) coincides with the chi-square divergence between the joint distribution $P_{X,Y}$ and the product of marginals $P_XP_Y$.
Given that the MI is defined using the KL divergence instead of chi-square divergence, this suggests that the HSIC-based quantity can be interpreted as a variant of MI.
More specifically, \citet[Theorem 5]{gibbs2002choosing} shows that the chi-square divergence upper-bounds the KL divergence.
Using HSIC as an information-theoretic quantity is a common practice in the literature on IB and learning algorithms \citep{ma2020hsic,pogodin2020kernelized,wang2021revisiting,jian2022pruning,guo2023automatic,wang2023dualhsic,sakamoto2024endtoend}.

We now describe the estimation procedure along with the necessary preliminaries.
The HSIC is a measure of statistical dependence between two random variables that can capture non-linear relationships.
Suppose that we have two random variables $X$ and $Y$ on probability spaces $(\gX, \gB_X, P_X)$ and $(\gY, \gB_Y, P_Y)$, respectively, where $\gB_X$ and $\gB_Y$ are the Borel $\sigma$-algebras on $\gX$ and $\gY$.
We consider functions that map elements of each sample space to real values.
Let $\gF$ and $\gG$ be reproducing kernel Hilbert spaces (RKHS) with corresponding kernels $\kappa_\gF: \gX \times \gX \to \R$ and $\kappa_\gG: \gY \times \gY \to \R$.
The mean $\mu_X$ is defined as an element of $\gF$ such that $\langle \mu_X, f \rangle_\gF = \E_{X}[f(X)]$ for all $f \in \gF$.
Similarly, let $\mu_Y$ and $\mu_{XY}$ denote the mean elements of $\gG$ and $\gF \otimes \gG$, respectively.
The cross-covariance operator $C_{XY}: \gG \to \gF$ is defined as a linear operator such that $\gC_{XY} \coloneqq \mu_{XY} - \mu_X \mu_Y$.
Please note that $\mu_{XY} - \mu_X\mu_Y$ is an element of $\gF \otimes \gG$, but we can regard it as a linear operator from $\gG$ to $\gF$ by defining $\langle f, \gC_{XY} g \rangle_\gF = \langle f g, \mu_{XY} -\mu_X\mu_Y \rangle_{\gF \otimes \gG}$ for any $f \in \gF$ and $g \in \gG$.
The Hilbert Schmidt norm of the linear operator $\gC: \gG \to \gF$ is defined as $\| \gC \|_{HS}^2 \coloneqq \sum_{i,j} \langle \phi_i, \gC \psi_j \rangle_\gF^2$, where $\{\phi_i\}$ and $\{\psi_j\}$ are orthonormal bases of $\gF$ and $\gG$, respectively.
We define the HSIC as $\operatorname{HSIC}(X, Y) \coloneqq \| \gC_{XY} \|_{HS}^2$, which is calculated using kernel functions as follows:
\begin{align}
\begin{split}
\operatorname{HSIC}(X, Y)
&= \E_{X,Y,X^\prime,Y^\prime} \left[ \kappa_\gF(X, X^\prime) \kappa_\gG(Y, Y^\prime) \right] \\
&\qquad -2\E_{X,Y} \left[ \E_{X^\prime}\left[ \kappa_\gF(X, X^\prime) \mid X \right] \E_{Y^\prime} \left[ \kappa_\gG(Y, Y^\prime) \mid Y \right] \right] \\
&\qquad + \E_{X,X^\prime} \left[ \kappa_\gF(X, X^\prime) \right] \E_{Y,Y^\prime} \left[ \kappa_\gG(Y, Y^\prime) \right],
\end{split}
\end{align}
where $(X^\prime, Y^\prime)$ is independent copy of $(X, Y)$.
Given a dataset $\{(x_i, y_i)\}_{i=1}^N$ following $P_{X,Y}$, we can estimate the HSIC as $\Tr(\mK_X \mH \mK_Y \mH) / (N-1)^2$, where we denote $\mK_X, \mK_Y, \mH \in \R^{N \times N}$, $\mK_{X, i, j} = \kappa_\gF(x_i, x_j)$, $\mK_{Y, i, j} = \kappa_\gG(y_i, y_j)$, and the centering matrix $\mH = \mI_N - \1_N \1_N^\top / N$.
In our experiments, motivated by the connection to the chi-square divergence described above, we use a normalized version of HSIC (nHSIC).
Specifically, the nHSIC is defined as the Hilbert-Schmidt norm of the normalized cross-covariance operator, which is given by $\operatorname{nHSIC}(X, Y) \coloneqq \| \gC_{XX}^{-1/2} \gC_{XY} \gC_{YY}^{-1/2} \|_{HS}^2$.
As in previous studies employing nHSIC for DNN analysis, we define the estimator as $\Tr[ \mK_X \mH \left( \mK_X \mH + \epsilon N \mI_N \right)^{-1} \mK_Y \mH \left( \mK_Y \mH + \epsilon N \mI_N \right)^{-1}]$ and set $\epsilon = 1\mathrm{e}{-5}$.

\section{Additional Experiments}
\label{sec:additional_experiments}

\subsection{Grokking}
\label{sec:additional_experiments_grokking}

\paragraph{Decision Boundaries for Additional Class Pairs.}

\begin{figure}[t]
\begin{subfigure}[b]{\textwidth}
    \centering
    \captionsetup{labelformat=empty}
    \includegraphics[width=0.9\textwidth]{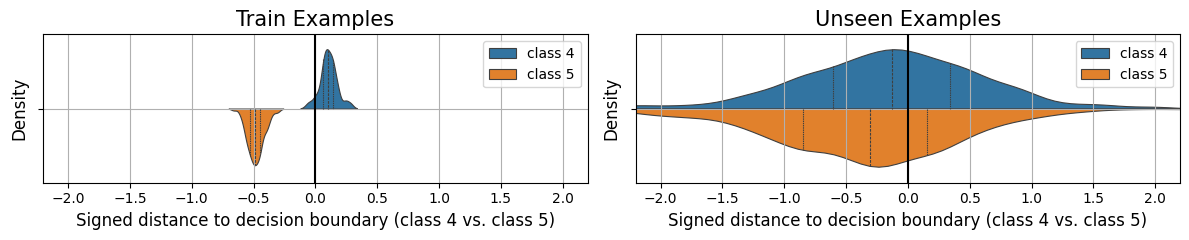}
\label{fig:grokking_boundary_overfit_4_5}
\caption{
(a.1) Overfitting phase (time step $\tau_1 = 16{,}000$): Class 4 vs. 5.}
\end{subfigure}
\begin{subfigure}[b]{\textwidth}
    \centering
    \captionsetup{labelformat=empty}
    \includegraphics[width=0.9\textwidth]{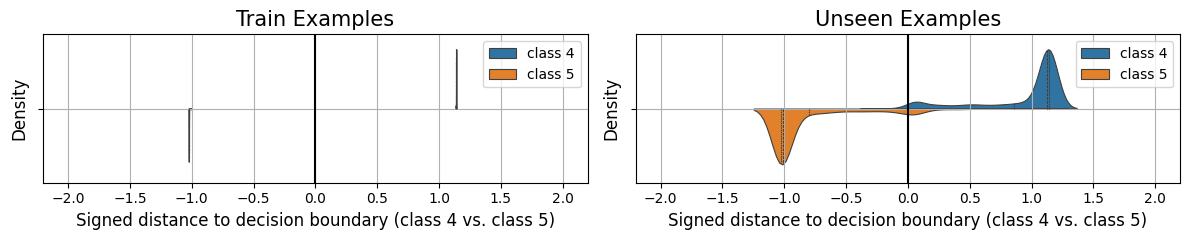}
\label{fig:grokking_boundary_generalize_4_5}
\caption{
(a.2) Convergence phase (time step $\tau_2 = 100{,}000$): Class 4 vs. 5.}
\end{subfigure}
\hfill
\begin{subfigure}[b]{\textwidth}
    \centering
    \captionsetup{labelformat=empty}
    \includegraphics[width=0.9\textwidth]{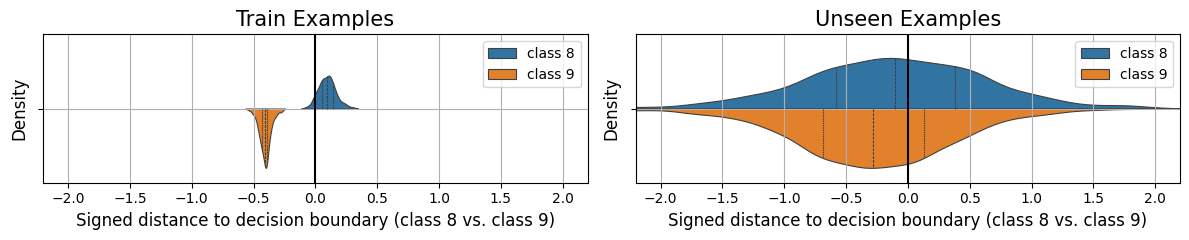}
\label{fig:grokking_boundary_overfit_8_9}
\caption{
(b.1) Overfitting phase (time step $\tau_1 = 16{,}000$). Class 8 vs. 9.}
\end{subfigure}
\begin{subfigure}[b]{\textwidth}
    \centering
    \captionsetup{labelformat=empty}
    \includegraphics[width=0.9\textwidth]{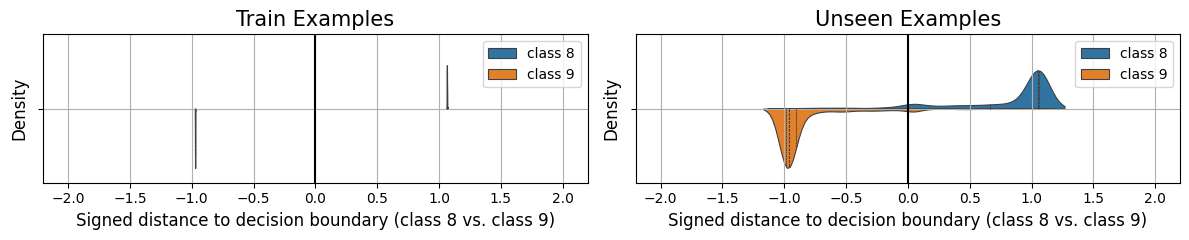}
\label{fig:grokking_boundary_generalize_8_9}
\caption{
(b.2) Convergence phase (time step $\tau_2 = 100{,}000$): Class 8 vs. 9.}
\end{subfigure}
\caption{
Margins of individual examples at two time steps during grokking.
The model is a 4-layer MLP on the MNIST dataset.
Here, we provide the corresponding plots for several class pairs other than the 0-1 pair shown in \cref{fig:grokking_boundary}.
}
\label{fig:grokking_boundary_other_class_pairs}
\end{figure}

\cref{fig:grokking_boundary} showed how representations evolve during training on MNIST, but due to space limitations in the main text, we presented only the decision boundary between class $0$ and $1$.
To demonstrate that the same phenomenon occurs for other class pairs as well, \cref{fig:grokking_boundary_other_class_pairs} shows results for two additional pairs: class $4$ vs. $5$ and class $8$ vs. $9$.
For these pairs, we observe the same trend: during the overfitting phase, training examples are already almost separable but still exhibit large within-class variance.
As training proceeds, the training examples become more separated and more tightly clustered.
Consequently, test examples are better separated and more concentrated in representation space, leading to improved generalization.

\paragraph{Fashion-MNIST.}

\begin{figure}[t]
\begin{subfigure}[b]{\textwidth}
\centering
\includegraphics[width=0.85\textwidth]{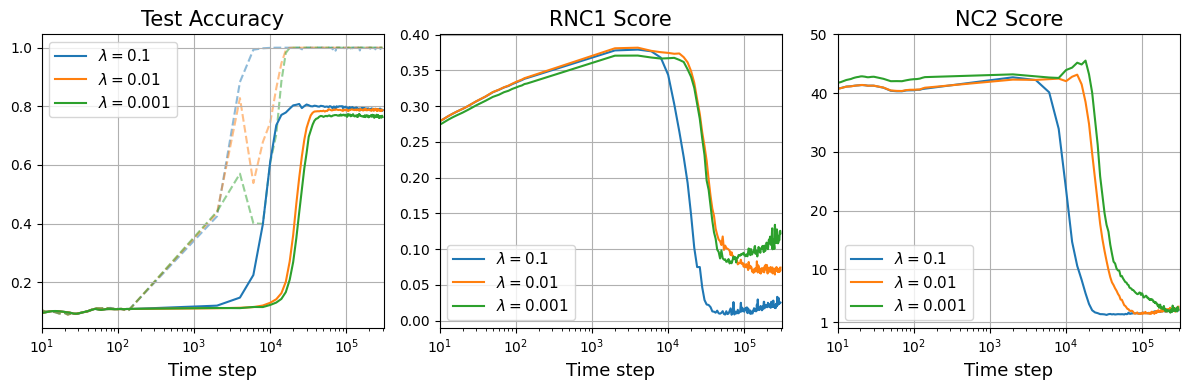}
\caption{$1000$ training samples.}
\label{fig:grokking_wd_fm_1000}
\end{subfigure}
\begin{subfigure}[b]{\textwidth}
\centering
\includegraphics[width=0.85\textwidth]{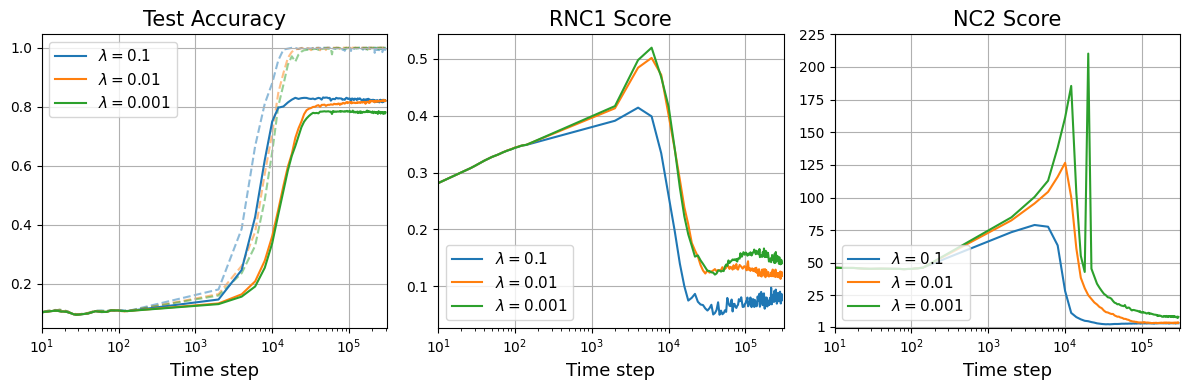}
\caption{$3000$ training samples.}
\label{fig:grokking_wd_fm_3000}
\end{subfigure}
\caption{
MLP trained on the Fashion-MNIST dataset with different weight decay coefficients $\lambda$.
Test accuracy, RNC1, and NC2 scores are reported.
In the test accuracy panel (left), the training accuracy is additionally shown in dashed lines of the same color to visualize grokking behavior.
Results are averaged over five different seeds.
}
\label{fig:grokking_weight_decay_fashionmnist}
\end{figure}

We conduct experiments on Fashion-MNIST \citep{xiao2017/online} as additional experiments on a different dataset.
Following the main text, we use a four-layer MLP with the same training configurations.
\cref{fig:grokking_weight_decay_fashionmnist} shows the results, indicating that grokking also occurs on Fashion-MNIST.
What is important here is not merely the occurrence of grokking, but rather that the decrease in the RNC1 score coincides with the increase in test accuracy, and that stronger weight decay accelerates this timing.
These observations reinforce our theoretical results.
As supplementary information, we also provide results when increasing the training set size from $1000$ to $3000$.
In this case, the overall trend remains unchanged, but the test accuracy increases slightly earlier.

\paragraph{Modular Arithmetic.}
Previous studies on grokking have primarily focused on modular arithmetic tasks.
For example, the addition task takes two integers $a$ and $b$ as input and outputs their sum modulo a prime number $p$, i.e., $(a + b) \mod p$.
However, this setup is closer to a regression problem than a classification, as the outputs have an ordinal structure rather than being categorical labels.
As our analysis is on classification, we consider this problem outside the scope of our study.

\paragraph{CNN.}

\begin{figure}[t]
\centering
\includegraphics[width=0.85\textwidth]{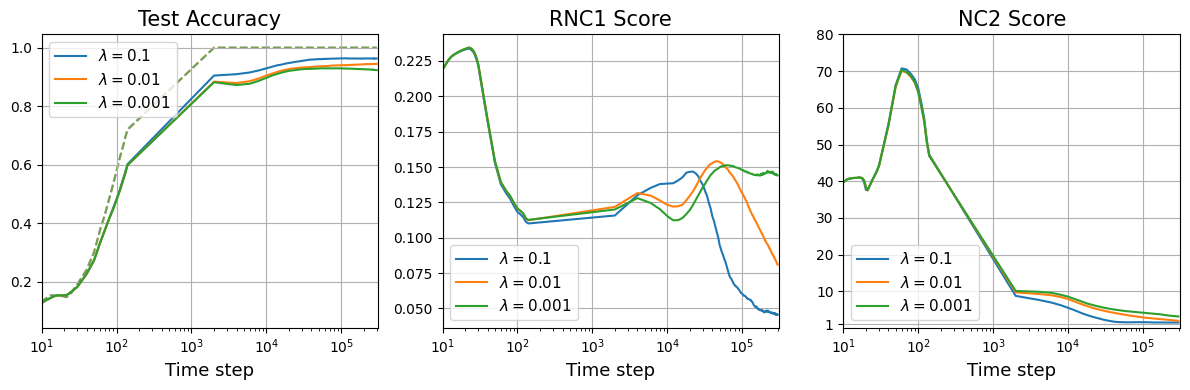}
\caption{
CNN trained on the MNIST dataset with different weight decay $\lambda$.
Test accuracy, RNC1, and NC2 scores are reported.
In the test accuracy panel (left), the training accuracy is additionally shown in dashed lines of the same color.
Results are averaged over five different seeds.
}
\label{fig:grokking_weight_decay_cnn}
\end{figure}

\begin{figure}[t]
\centering
\begin{subfigure}[b]{\textwidth}
    \includegraphics[width=0.95\textwidth]{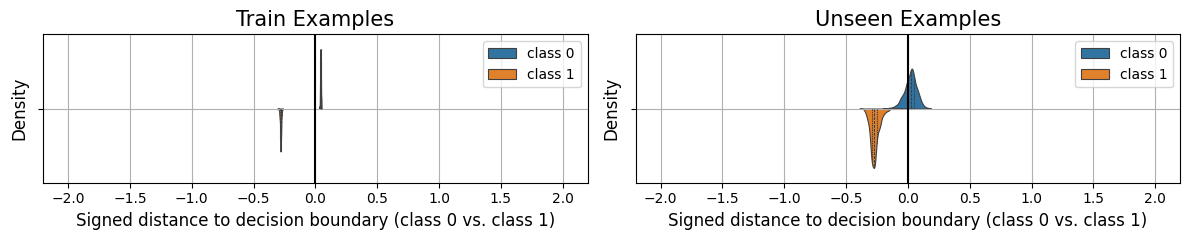}
\caption{Fitting phase (time step $\tau_1 = 2{,}700$). Train accuracy = 100\%, test accuracy = 86\%.}
\label{fig:grokking_boundary_overfit_cnn}
\end{subfigure}
\begin{subfigure}[b]{\textwidth}
    \includegraphics[width=0.95\textwidth]{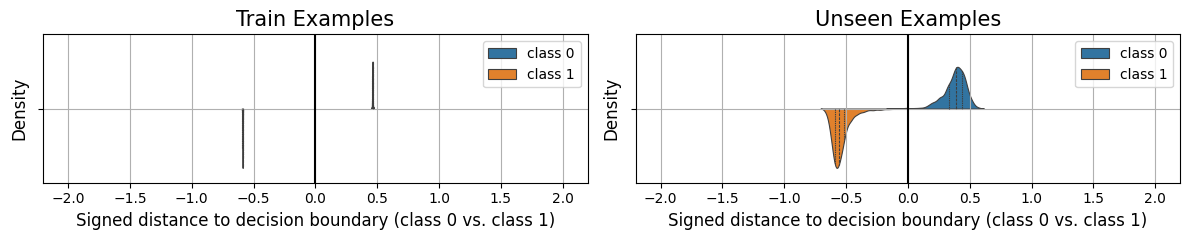}
\caption{Convergence phase (time step $\tau_2 = 100{,}000$). Train accuracy = 100\%, test accuracy = 94\%.}
\label{fig:grokking_boundary_generalize_cnn}
\end{subfigure}
\caption{
Margins at two time steps for a CNN trained on the MNIST dataset.
}
\label{fig:grokking_boundary_cnn}
\end{figure}

In the main text, we examined grokking in a classification setting using MLP models, following \citet{liu2023omnigrok}.
This choice is consistent with the theoretical analysis setting in \cref{sec:evolution_of_within_class_variance}, where neural collapse was studied with an MLP feature extractor.
To further investigate whether grokking occurs in other architectures, we conducted additional experiments with CNNs.
We trained a CNN consisting of two convolutional layers with max-pooling, followed by two fully connected layers.
As in the MLP experiments, we attempted to scale the initialization of all layers by a factor of eight, but training was unstable.
We therefore considered the modification scaling only the fully connected layers.

\cref{fig:grokking_weight_decay_cnn} shows the results of changing the weight decay parameter $\lambda$.
For all $\lambda$, test accuracy improves at the same time as training accuracy, and no grokking behavior is observed.
Nevertheless, the results are consistent with our analysis: test accuracy improves in parallel with the progression of neural collapse.
As for why there is no time lag between fitting the training set and the reduction of within-class variance, a possible explanation is that CNNs, due to their inductive bias of local invariance, already extract useful features during the fitting phase.
\cref{thm:nc1_dynamics} suggests that the fit of the output $f(\mX)$ to the labels $\mY$ is propagated to the preceding layer, i.e., the representation space, through the progression of weight balancedness.
In contrast, if the representations are already well concentrated from the fitting phase, then no such delay arises after $f(\mX)$ fits the labels.
This interpretation is supported by \cref{fig:grokking_boundary_cnn}, which shows the CNN results for the experiment in \cref{fig:grokking_boundary} of the main text.
Compared with the grokking scenario with MLPs in \cref{fig:grokking_boundary}, the CNN representations are already well collapsed at the point when the training set is first fitted (\cref{fig:grokking_boundary_overfit_cnn}).
Consequently, the margin distribution for test examples is relatively concentrated from the fitting phase, and as training progresses, its center becomes increasingly separated.
Finally, \cref{fig:grokking_weight_decay_cnn} also shows that when $\lambda = 0.1$, the RNC1 score continues to decrease after the training accuracy has reached $1.0$, and the test accuracy further improves.
Thus, although this case is not grokking behavior, it demonstrates that continuing training after fitting the training set can further reduce the within-class variance and thereby improve test accuracy.

\paragraph{Transformer + MNIST.}

\begin{figure}[t]
\begin{subfigure}[b]{\textwidth}
\centering
\includegraphics[width=0.95\textwidth]{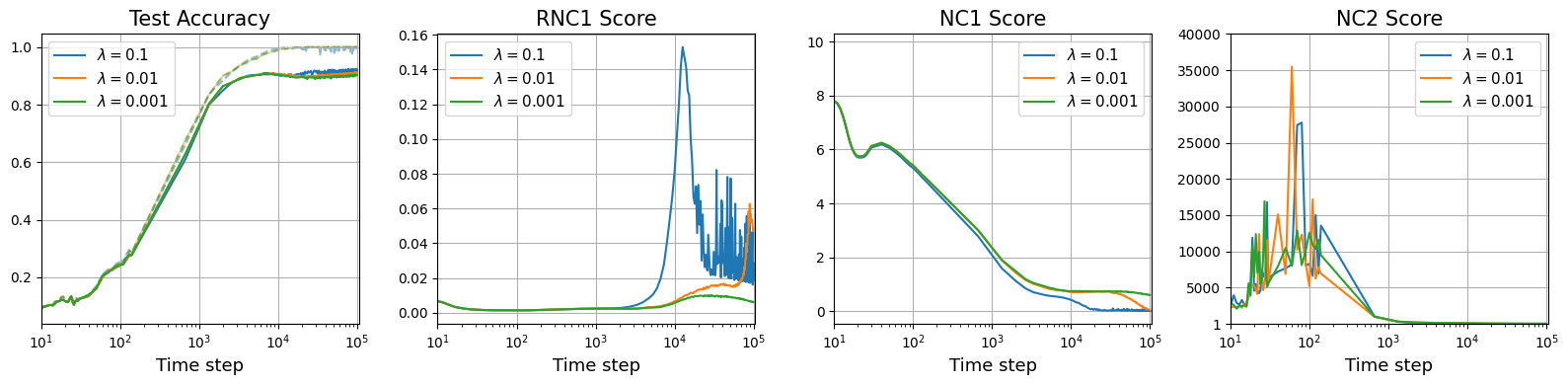}
\caption{Different weight decay $\lambda$.}
\label{fig:tf_mnist_weight_decay}
\end{subfigure}
\begin{subfigure}[b]{\textwidth}
\centering
\includegraphics[width=0.95\textwidth]{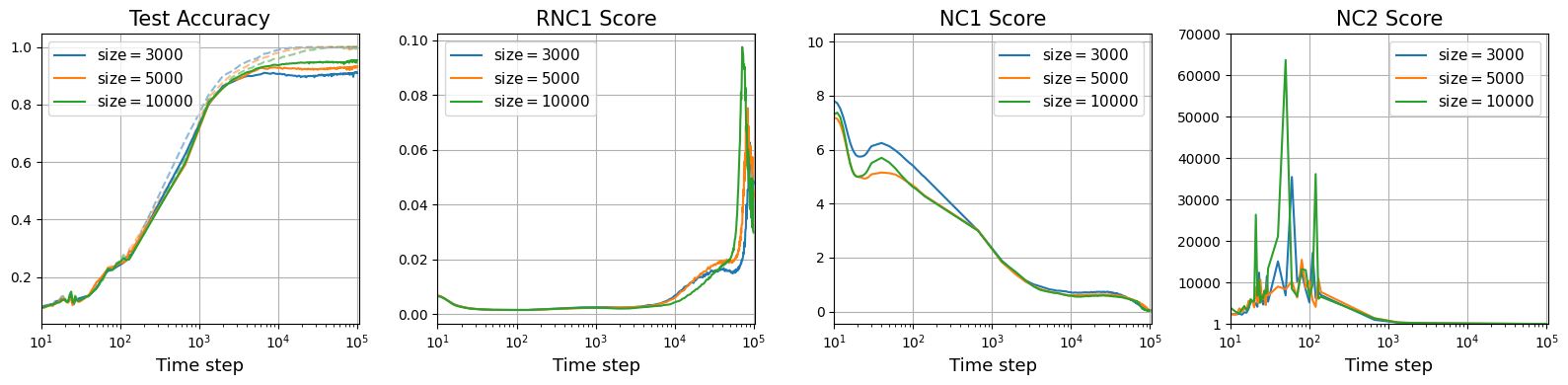}
\caption{Different training sample sizes.}
\label{fig:tf_mnist_sample_size}
\end{subfigure}
\caption{
One-layer ViT trained on the MNIST dataset.
Test accuracy, RNC1, NC1, and NC2 scores are reported.
In the test accuracy panel (left), the training accuracy is additionally shown in dashed lines of the same color.
Results are averaged over five different seeds.
}
\label{fig:tf_mnist}
\end{figure}

We also conducted experiments with a transformer architecture \citep{vaswani2017attention} for MNIST classification.
Specifically, we use a one-layer vision transformer(ViT) \citep{dosovitskiy2020image} with hidden dimension $128$, four heads, and feedforward dimension $256$, without dropout.
The input images are divided into patches of size $4$ and embedded with a convolutional layer, followed by learnable positional encoding and a class token.
At the position of the class token, a linear head is attached for classification.
Following prior work of grokking, we adopt an initialization scale of eight, but for training stability, the scaling is applied to the feedforward layers and the linear head.

\cref{fig:tf_mnist} shows the results when changing weight decay and training sample sizes, where no grokking behavior is observed.
In this case, the RNC1 score and test accuracy show different trends, whereas the NC1 score decreases in accordance with the improvement in test accuracy.
This reflects that, to account for test accuracy improvements, not only the reduction in the RNC1 score but also class mean separation must be considered.
\cref{thm:grokking} captures both of these elements in the generalization bound.
As shown in the grokking experiments in the main text, when the class means are separated, reducing the RNC1 score leads to improved test accuracy.
In contrast, when the class mean separation is insufficient in the early stages of training, a small RNC1 score alone does not guarantee good classification performance.
This aspect is reflected in the NC1 score; as discussed in \cref{remark:rnc1_and_nc1}, unlike RNC1, the NC1 score incorporates the information on class-mean separation through its ratio with between-class variance.
It explains why its decrease coincides with improvements in test accuracy in \cref{fig:tf_mnist}.

\paragraph{Transformer + Text Datasets.}

\begin{figure}[t]
\begin{subfigure}[b]{\textwidth}
\centering
\includegraphics[width=0.85\textwidth]{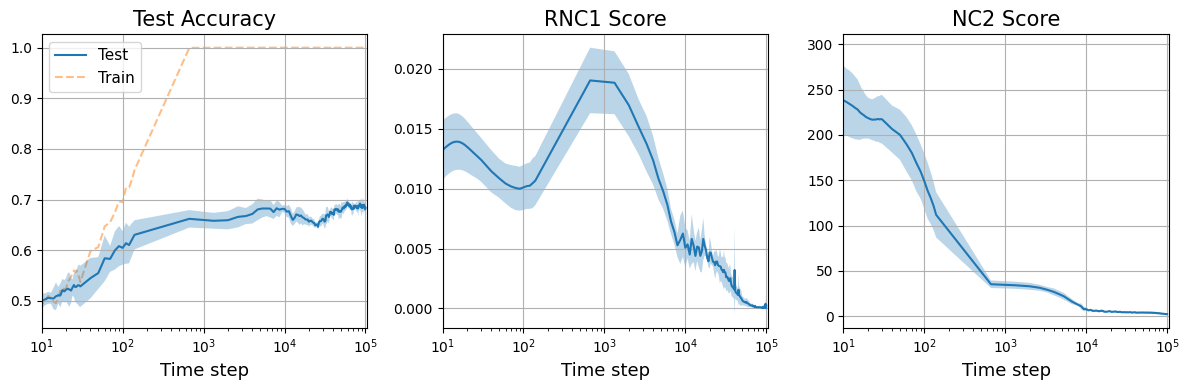}
\caption{Standard initialization.}
\label{fig:sst2_init_1}
\end{subfigure}
\begin{subfigure}[b]{\textwidth}
\centering
\includegraphics[width=0.85\textwidth]{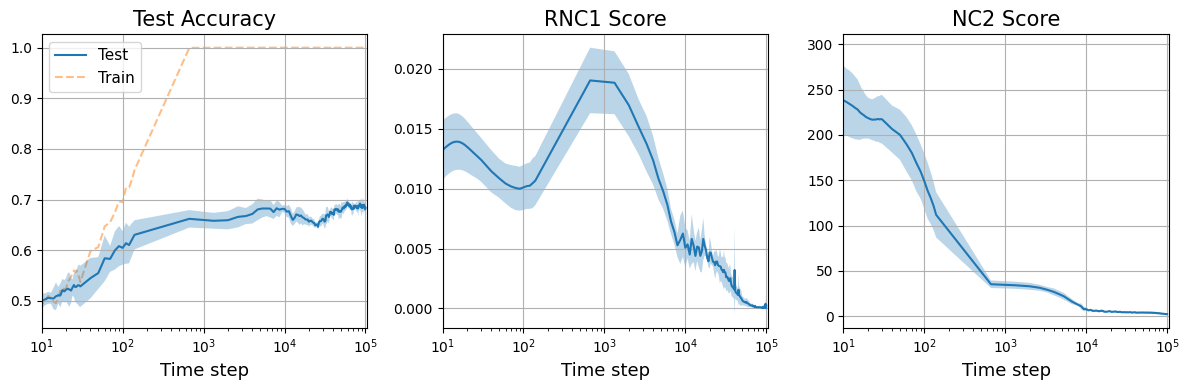}
\caption{Scaled initialization ($\times 8$).}
\label{fig:sst2_init_8}
\end{subfigure}
\caption{
One-layer transformer encoder trained on SST-2.
Results are averaged over five different seeds, and shaded areas correspond to one standard deviation.
}
\label{fig:sst2_tf}
\end{figure}

\begin{figure}[t]
\begin{subfigure}[b]{\textwidth}
\centering
\includegraphics[width=0.85\textwidth]{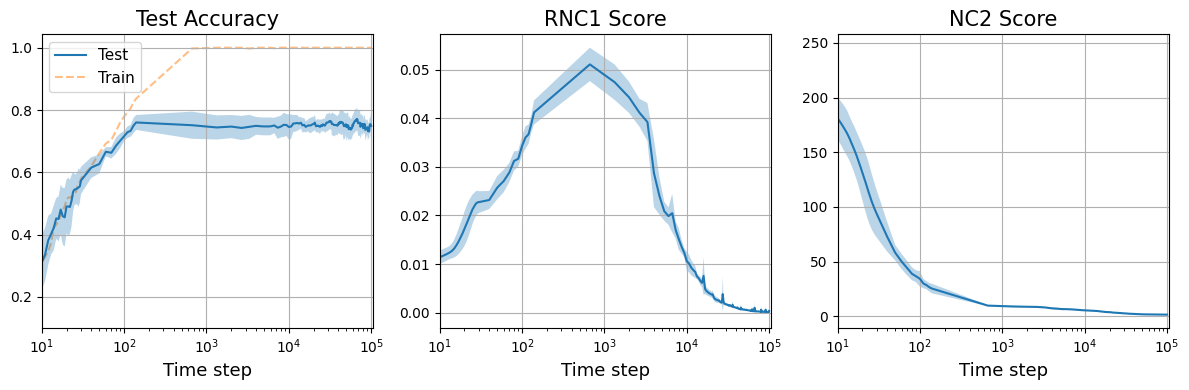}
\caption{Standard initialization.}
\label{fig:trec_init_1}
\end{subfigure}
\begin{subfigure}[b]{\textwidth}
\centering
\includegraphics[width=0.85\textwidth]{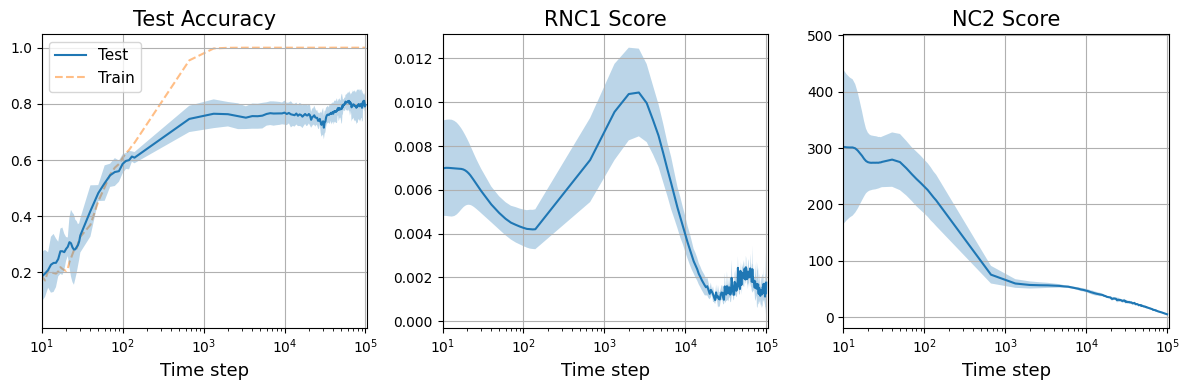}
\caption{Scaled initialization ($\times 8$).}
\label{fig:trec_init_8}
\end{subfigure}
\caption{
One-layer transformer encoder trained on TREC-6.
Results are averaged over five different seeds, and shaded areas correspond to one standard deviation.
}
\label{fig:trec_tf}
\end{figure}

\begin{figure}[t]
\begin{subfigure}[b]{\textwidth}
\centering
\includegraphics[width=0.85\textwidth]{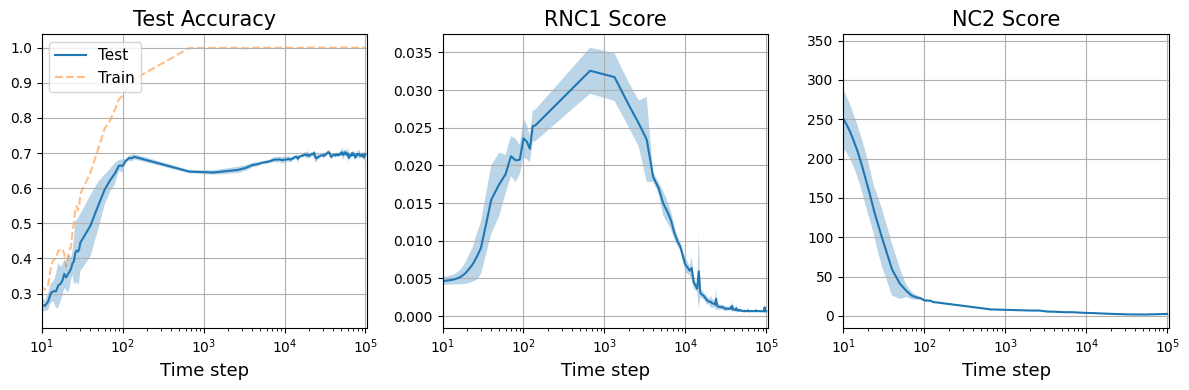}
\caption{Standard initialization.}
\label{fig:agnews_init_1}
\end{subfigure}
\begin{subfigure}[b]{\textwidth}
\centering
\includegraphics[width=0.85\textwidth]{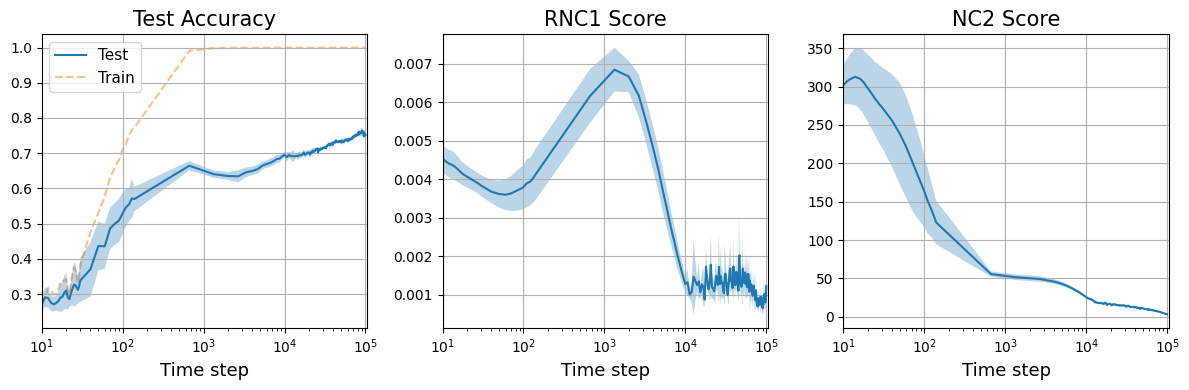}
\caption{Scaled initialization ($\times 8$).}
\label{fig:agnews_init_8}
\end{subfigure}
\caption{
One-layer transformer encoder trained on AG-news.
Results are averaged over five different seeds, and shaded areas correspond to one standard deviation.
}
\label{fig:agnews_tf}
\end{figure}

To validate our findings across diverse datasets, we conducted experiments on three text classification benchmarks: \textbf{SST-2}, a binary sentiment classification task \citep{socher-etal-2013-recursive}; \textbf{TREC-6}, a question classification task with six classes \citep{hovy-etal-2001-toward,li-roth-2002-learning}; and \textbf{AG-news}, a topic classification task with four news categories \citep{zhang2015character}.
For preprocessing, we use the Hugging Face Bert WordPiece tokenizer \citep{devlin2019bert} solely for tokenization, restricting the vocabulary to tokens present in the training set.
All embeddings, including unknown and padding tokens, are randomly initialized, and the maximum sequence length is set to $128$.
The model configuration is the same as in the MNIST ViT experiment.
Training is performed with a weight decay of $1\mathrm{e}{-2}$ and a training set size of $3000$.
We show results for both standard initialization and a variant where the initialization scale of the feedforward and linear layers is scaled up by a factor of $8$, building on the prior grokking studies.

The results are shown in \cref{fig:sst2_tf} (SST-2), \cref{fig:trec_tf} (TREC-6), and \cref{fig:agnews_tf} (AG-news).
In all cases, we observe little difference between different initialization scales, and grokking does not occur.
As a consistent trend, the figures show that as the model fits the training set, the RNC1 score first peaks and then decreases.
Notably, for SST-2 and AG-news, this decrease in RNC1 score coincides with a gradual improvement in test accuracy.
While this behavior is not as abrupt as grokking, it supports our theoretical result that continuing training beyond the training accuracy plateau, driving further neural collapse, can benefit generalization.

\subsection{IB Dynamics}
\label{sec:additional_experiments_ib}

\paragraph{Fashion-MNIST.}

\begin{figure}[t]
\centering
\includegraphics[width=0.85\textwidth]{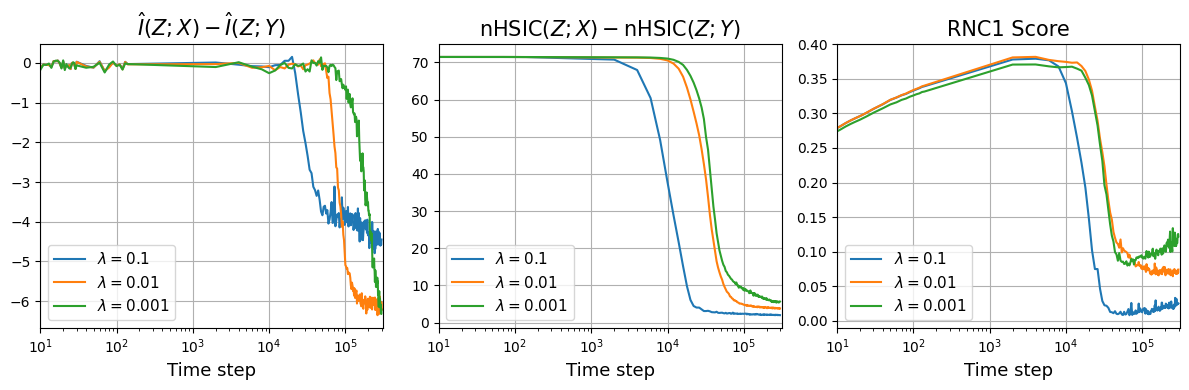}
\caption{
MLP trained on the Fashion-MNIST dataset with different weight decay coefficients $\lambda$.
Dynamics of the redundant information (estimated via MI and nHSIC) and RNC1 scores are reported.
Results are averaged over five different seeds.
}
\label{fig:ib_weight_decay_fashionmnist}
\end{figure}

As an additional experiment on a different dataset, we conducted experiments on Fashion-MNIST.
The experimental setup is the same as that of \cref{fig:grokking_wd_fm_1000}, containing a four-layer MLP with scaled initialization.
\cref{fig:ib_weight_decay_fashionmnist} shows the results.
As noted before, increasing the weight decay accelerates the decrease in the RNC1 score, and the same behavior is observed for redundant information measured by MI and nHSIC.
Similar to the MNIST experiments in \cref{fig:ib_weight_decay}, the redundant information estimated via MI decreases slightly later than that estimated via nHSIC.
In either case, the results indicate that the decrease in the RNC1 score leads to the reduction of these information measures.

\paragraph{ResNet + CIFAR10.}

\begin{figure}[t]
\centering
\includegraphics[width=0.95\textwidth]{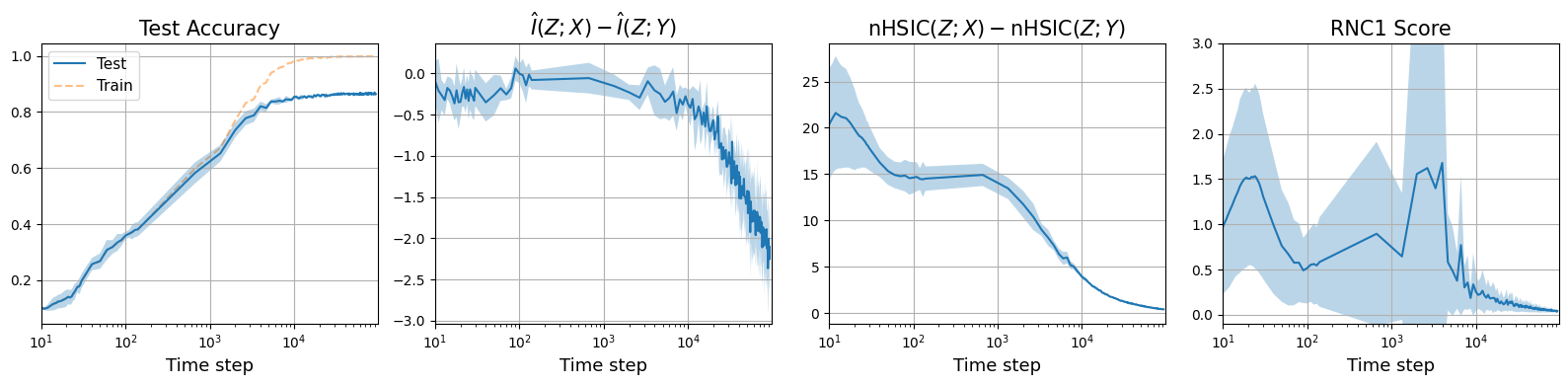}
\caption{
ResNet18 trained on the CIFAR10 dataset.
Dynamics of test accuracy, redundant information (estimated via MI and nHSIC), and RNC1 scores are reported.
In the test accuracy panel (left), the training accuracy is shown in dashed lines.
Results are averaged over five different seeds, and shaded areas correspond to one standard deviation.
}
\label{fig:resnet_cifar10}
\end{figure}

We also conducted IB experiments in a more standard setting, training ResNet18 \citep{he2016deep} on the CIFAR10 dataset \citep{Krizhevsky09learningmultiple}.
In the grokking experiments, it was important to delay the decrease of the RNC1 score relative to the fit to the training set, which was achieved by adopting a large initialization scale and a small sample size.
In contrast, the reduction of redundant information in the IB principle does not necessarily require such a timing discrepancy.
Therefore, in this experiment, we used the standard initialization scale and the full training set of $50{,}000$ examples.
\cref{fig:resnet_cifar10} shows the results, with test accuracy shown on the left for the reference.
For both MI and nHSIC, redundant information decreases as training progresses.
When compared with the behavior of the RNC1 score, the trends are similar particularly in the later phase of training, supporting our theoretical result that links the reduction of IB superfluous information to the decrease of the RNC1 score.
The left panel shows that this later compression phase corresponds to the period after the training set has already been fit.
This suggests that continuing training to promote neural collapse is beneficial not only for generalization but also from the perspective of the IB principle.

\end{document}